\definecolor{darkblue}{rgb}{0, 0, 0.5}
\theoremstyle{definition}
\newcommand{\mynum}[1]{\textsubscript{#1}}
\newcommand{\link}[1]{{\sc #1}}
\newcommand{\TLEX}{{\sc tlex}\xspace}
\newcommand{\TLINK}{{\sc tlink}\xspace}
\newcommand{\SLINK}{{\sc slink}\xspace}
\newcommand{\ALINK}{{\sc alink}\xspace}
\newcommand{\TLINKs}{{\sc tlink}s\xspace}
\newcommand{\SLINKs}{{\sc slink}s\xspace}
\newcommand{\ALINKs}{{\sc alink}s\xspace}
\newcommand{\R}{\mathbb{R}}
\newcommand{\N}{\mathbb{N}}
\newcommand{\ignore}[1]{}
\newtheorem{theorem}{Theorem}
\newtheorem{definition}{Definition}
\newtheorem{lemma}{Lemma}
\title{TLEX: An Efficient Method for Extracting Exact Timelines from TimeML Temporal Graphs}
\author{Mustafa Ocal \\
	Knight Foundation School of Computing and Information Sciences\\
	Florida International University\\
	\texttt{mocal@fiu.edu} \\
	\And
		{Ning Xie} \\
	Knight Foundation School of Computing and Information Sciences\\
	Florida International University\\
	\texttt{nxie@cis.fiu.edu} \\
 \And
    {Mark Finlayson} \\
	Knight Foundation School of Computing and Information Sciences\\
	Florida International University\\
	\texttt{markaf@fiu.edu} \\
}
\begin{document}
\maketitle

\begin{abstract}
A timeline provides a total ordering of events and times, and is useful for a number of natural language understanding tasks. However, qualitative temporal graphs that can be derived directly from text---such as TimeML annotations---usually explicitly reveal only {\it partial} orderings of events and times. In this work, we apply prior work on solving point algebra problems to the task of extracting timelines from TimeML annotated texts, and develop an exact, end-to-end solution which we call \TLEX (TimeLine EXtraction). \TLEX transforms TimeML annotations into a collection of timelines arranged in a trunk-and-branch structure. Like what has been done in prior work, \TLEX checks the consistency of the temporal graph and solves it; however, it adds two novel functionalities. First, it identifies specific relations involved in an inconsistency (which could then be manually corrected) and, second, \TLEX performs a novel identification of sections of the timelines that have indeterminate order, information critical for downstream tasks such as aligning events from different timelines. We provide detailed descriptions and analysis of the algorithmic components in \TLEX, and conduct experimental evaluations by applying \TLEX to 385 TimeML annotated texts from four corpora. We show that 123 of the texts are inconsistent, 181 of them have more than one ``real world'' or main timeline, and there are 2,541 indeterminate sections across all four corpora. A sampling evaluation showed that \TLEX is 98--100\% accurate with 95\% confidence along five dimensions: the ordering of time-points, the number of main timelines, the placement of time-points on main versus subordinate timelines, the connecting point of branch timelines, and the location of the indeterminate sections. We provide a reference implementation of \TLEX, the extracted timelines for all texts, and the manual corrections to the temporal graphs of the inconsistent texts.
\end{abstract}

\keywords{Timeline, TimeML, Temporal Reasoning, Temporal Information Extraction}

\section{Introduction}
A timeline is a data structure that organizes events and times in a total ordering. Timelines are useful for a number of natural language understanding tasks, such as {\bf question answering}, which may require understanding the overall temporal order of events to produce the correct answer~\citep{Saquete:2004}; {\bf cross-document event co-reference (CDEC)} and {\bf cross-document alignment}, whose accuracy can be improved by access to a total ordering of events~\citep{Navarro:2016}; and {\bf summarization} and {\bf visualization}~\citep{Liu:2012}, where timelines can enhance human understanding of the events as well as the temporal structure of the texts.

Unfortunately, timelines are rarely explicit in text, and usually cannot be extracted directly. Instead, texts usually explicitly reveal only {\it partial} orderings of events and times. Such partial information can be used---either through automatic analyzers~\citep{Verhagen:2005}, or manual annotation~\citep{Pustejovsky2003timebank}, or some combination of both---to construct a temporal graph labeled in some temporal representation language, such as a temporal algebra~\citep{Allen:1983} or TimeML~\citep{Sauri:06}.

For temporal graphs labeled with a temporal algebra~\citep{Bartak:2014}, prior work in the field of {\it Qualitative Spatial and Temporal Reasoning} (QSTR) has shown how to extract an exact timeline~\citep{Kreutzmann:2014,Beek:92,Gerevini:95} that represents one possible solution of total ordering. However, this approach does not work for natural language involving temporal information that cannot be encoded in a strictly temporal algebra (i.e., expressions of possible, counterfactual, or conditional worlds---called {\it subordinated} events here), which is the reason why schemes such as TimeML were developed. Also, natural language is often ambiguous, hence it is useful to know which portions of a timeline are {\it indeterminate}---that is, have multiple possible orderings all consistent with the temporal graph. 

There has been some prior work in the field of Natural Language Processing (NLP) on extracting timelines from TimeML graphs using machine learning, but these solutions fall short in dealing with all possible relations and can result in outputs that contain ordering errors (see \S\ref{sec:priorapproaches}). These errors are a natural consequence of using statistical approaches to solve the problem. 

In this work, we apply the theoretical formalism and techniques from QSTR to TimeML annotated texts, formulate timeline extraction as a {\it topological sort problem}, and show that one can achieve a theoretically {\it exact} solution, unlike prior NLP approaches. Furthermore, our framework handles all possible temporal relations (including aspectual and subordination relations) and detects indeterminacy. Specifically, we present \TLEX (TimeLine EXtraction), a method for extracting a set of timelines from a TimeML annotated text, which handles all TimeML relations and achieves perfect accuracy modulo the correctness of an underlying TimeML graph. \TLEX outputs multiple timelines, one for each temporally connected subgraph, including \textit{subordinated} timelines which represent possible (modal), counterfactual, or conditional situations (subordination is a representational feature of TimeML). These subordinated timelines are connected to the main timeline in a trunk-and-branch timeline structure.\footnote{It is worth mentioning that, although \TLEX is the first system that converts a TimeML graph into trunk-and-branch timeline structure, the idea of separating subordinated events in annotations was previously introduced in the multi-axis annotation scheme model~\citep{ning-etal-2018-multi}, in which the authors defined different axis for different types of events in the annotations such as intention/opinion events (on an orthogonal axis), generic events (on a parallel axis), negations (not on an axis), and static events (other axis).} Like prior work, \TLEX checks the TimeML graph for consistency and performs topological sorting to produce a timeline but goes further by automatically identifying inconsistent subgraphs, which allows texts to be manually corrected. It also identifies indeterminacy in ordering, a common consequence of the ambiguity in natural languages.

We provide a formal proof of \TLEX's correctness, as well as experimental evaluations using 385 manually annotated texts comprising 129,860 words across four corpora. This revealed 123 inconsistent temporal graphs, for which we provide manual corrections. We also checked five different features of the extracted timelines using a sampling evaluation. Our measurements show that \TLEX achieves an accuracy of 98--100\% with 95\% confidence on all these measurements. We release (a) a reference implementation of the \TLEX algorithm in Java, (b) the timelines extracted from the corpora, (c) corrections to the corpora, and (d) code to reproduce our experiments\footnote{\url{https://cognac.cs.fiu.edu/jtlex/}}.

There are four main contributions of this work. {\bf First}, we describe the \TLEX algorithm in detail, providing a clear explanation of how it works, a formal proof of its correctness, and a reference implementation. {\bf Second}, we introduce a new indeterminacy marked trunk-and-branch timeline structure as \TLEX's final output. This structure separates main timelines (the ``real world'' or the trunks) from subordinated timelines (the ``possible worlds'' or the branches). This explicit recognition of the separation between ``real world'' and other types of events is important in the semantics of natural language and has not been dealt with in TimeML-to-timeline conversion work before. The identification of indeterminacy in timelines is also novel. {\bf Third}, we demonstrate a new technique for identifying the specific temporal links that contribute to temporal inconsistency, which allows targeted manual correction of inconsistent TimeML annotations. {\bf Last}, we experimentally evaluate \TLEX on four TimeML corpora. Through a sampling evaluation, \TLEX's performance is found to be accurate within error bounds,  confirming our theoretical results. 

The rest of this paper is organized as follows. First, we discuss related work, which we build upon and extend (\S\ref{sec:relatedwork}). Second, we provide a detailed description of \TLEX, breaking the procedure into five main steps, and present proof of correctness and time complexity analysis for each step (\S\ref{sec:approach}). We then describe our experimental evaluations and results (\S\ref{sec:evaluation}). We conclude with a discussion of possible future work (\S\ref{sec:futurework}) and a list of the contributions (\S\ref{sec:contributions}).

\section{Background and Related Work}
\label{sec:relatedwork}

\subsection{Temporal Algebras \& Reasoning}
\label{sec:theory}

\subsubsection{Interval Algebra}
\label{sec:intervalalgebra}

Allen's interval algebra~\citep{Allen:1983} is a calculus for temporal reasoning that defines possible relations between a pair of time intervals and the inference rules for relation compositions. Allen's algebra was one of the first attempts to computationally model temporal knowledge and temporal reasoning. In this formalism, every event is abstracted as a time interval $I$, which comprises a start point ($I^-$) and end time-point ($I^+$), where the start point comes strictly before the end point ($I^- < I^+$). Two intervals can be related by a set of $13$ mutually exclusive \emph{basic} temporal relations: {\it before}, {\it meets}, {\it overlaps}, {\it starts}, {\it during}, and {\it finishes}, their inverses, and {\it equal}. Allen's interval algebra allows any subset of these $13$ basic relations to form a general relation (hence there are in total $2^{13}$ relations); two intervals $A$ and $B$ can be related by a basic relation as $A$ {\it before} $B$, or by a more complex relation such as $A$ \{{\it before, during, starts}\} $B$ when we are less certain about their temporal relation. Allen presented a composition table for composing pairs of temporal relations, which can be used to infer the temporal relation between intervals $A$ and $C$ given the relation between $A$ and $B$ and the relation between $B$ and $C$. A fundamental question in Allen's algebra is the following \emph{satisfiability} problem: given $n$ interval $I_1, \ldots, I_n$ and $m$ pairwise relations among these intervals, does there exist a \emph{configuration of intervals}\footnote{A \emph{configuration} for a finite set of intervals $\mathcal{I}=\{I_1, \ldots, I_n\}$ is a set of $n$ bounded and closed intervals~\cite[[pp.~9]{ligozat13}. More formally, a configuration is a function $C: \mathcal{I} \to \mathbb{R} \times \mathbb{R}$ satisfying the property that, for every $1\leq i\leq n$, if $C(I_i)=\langle a_i, b_i \rangle$ (i.e., if interval $I_i$ is mapped to the closed interval $[a_i, b_i]$) then $-\infty<a_i \leq b_i <\infty$.} that satisfies all the relations? Unfortunately, it was shown that the satisfiability problem is NP-complete~\citep{Vilain:86,Vilain:90}. 

Using Allen's algebra, we can construct a \emph{temporal graph} from texts by viewing events and times as intervals and representing them as the vertices of the graph, and translating the relations between two events/times expressed in natural language into Allen's temporal relations and representing them as the labels of the edges in the graph. Allen's approach is referred to as a {\it qualitative} temporal algebra because it does not represent exact times or durations. A special type of temporal graph called an \emph{atomic} temporal graph, in which all edge labels are basic relations, is of particular importance for our purposes.

In a great deal of later work, Allen's qualitative framework was generalized and extended to the quantitative case, where time-points and durations are given specific metric values. Researchers have proved quite a number of formal results concerning both qualitative and quantitative formalisms (reviewed in \cite{Bartak:2014}). Of particular importance are those results related to checking the \emph{consistency} of temporal graphs, because a timeline can only be constructed for consistent temporal graphs. To check the consistency of graphs constructed using his algebra, Allen developed a polynomial time, constraint-propagation type algorithm that repeatedly uses the composition table to reconcile the relation between every pair of vertices, to ensure that their relation is consistent with those relations imposed by all length-two paths between them (so-called \emph{path consistency}) \citep{Allen:1983}. However, the algorithm does not detect all inconsistencies, and it was later shown that checking the consistency of general temporal graphs with all Allen relations is NP-complete~\citep{Vilain:86,Vilain:90}, which makes the task intractable under the commonly believed $\mathcal{P} \neq \mathcal{NP}$ assumption.

Because the full power of Allen's interval algebra is likely to be intractable, many researchers introduced tractable subclasses of Allen's algebra~\citep{freksa1992temporal,Nebel:95,Vilain:90}. In a celebrated paper~\citep{Nebel:95}, a subclass called ORD-HORN is shown to be the unique maximal tractable subclass of the full Allen algebra: its consistency can be checked in polynomial time by the path-consistency algorithm while checking any subclass that is greater than the ORD-HORN subclass is NP-complete.

\subsubsection{Point Algebra}
\label{sec:pointalgebra}

Introduced by \cite{Vilain:86}, the point algebra (PA) is a symbolic calculus intended to work with qualitative ordering constraints between pairs of time-points; for systematic treatments, see \citep{Bartak:2014} and references therein. PA defines three basic temporal constraints between two time-points: $<$, $>$, and $=$. If there is no information between two time-points $a$ and $b$, then we say $a$ \{{$<$, $>$, $=$}\} $b$. Temporal reasoning in PA is much more efficient since it has only three basic relations. For instance, the CSPAN algorithm checks the consistency of a PA temporal graph in $O(n^2)$ time, where $n$ is the number of time-points in the graph~\citep{Beek:92}. 

An interval sub-algebra of Allen's algebra is called \emph{pointisable} if its relations can be equivalently defined by a PA on the start and end points of the intervals involved~\citep{leeuwenberg2019survey}. For example, it is easy to check that $A$ {\it before} $B$ is equivalent to the constraints imposed by the PA temporal graph in which $A^-,  A^+ , B^-$ and $B^+$ are the nodes and each of the $12$ edges is labeled with an appropriate basic $<$ or $>$ relation (according to the ordering $A^- <  A^+ < B^- < B^+$). Moreover, it can be easily verified that all basic relations are pointisable.  However, one can also show that $A$ \{{\it before, before\_inverse}\} $B$ cannot be represented as PA constraints on the same set of $4$ time-points. In fact, it is known that there are only $167$ pointisable relations among all $2^{13}-1$ non-null relations in Allen's algebra. The translation of the basic Allen's algebra relations into PA relations is shown in Table~\ref{tab:constraints}.

\begin{table}[ht]
\small
\centering
\begin{tabular}{lll} 
\toprule
{\bf Allen's Algebra} & {\bf TimeML} & {\bf Point Algebra\tablefootnote{Here we omit the trivial relations $I^{-} < I^{+}$ for all intervals $I$ involved to save space.}}\\
\midrule
A \link{Before} B  & A \link{before} B       & $(A^+ < B^-)$ \\ 
A \link{After} B & A \link{after} B        & $(B^+ < A^-)$ \\

A \link{Meets} B & A \link{ibefore} B      & $(A^+ = B^-)$ \\ 
A \link{MetBy} B & A \link{iafter} B       & $(B^+ = A^-)$ \\

A \link{Starts} B & A \link{begins} B       & $(A^- = B^-) \land (A^+ < B^+)$ \\ 
A \link{StartedBy} B & A \link{begun\_by} B    & $(A^- = B^-) \land (B^+ < A^+)$ \\

A \link{Finishes} B & A \link{ends} B         & $(B^- < A^-) \land (A^+ = B^+)$ \\ 
A \link{FinishedBy} B & A \link{ended\_by} B    & $(A^- < B^-) \land (A^+ = B^+)$ \\

A \link{During} B & A \link{includes} B     & $(A^- < B^-) \land (B^+ < A^+)$ \\
A \link{DuringInverse} B & A \link{is\_included} B & $(B^- < A^-) \land (A^+ < B^+)$ \\

A \link{Equals} B & A \link{simultaneous} B & $(A^-$ = $B^-) \land (A^+$ = $B^+)$ \\ 
& A \link{identity} B     & $(A^- = B^-) \land (A^+ = B^+)$ \\ 

 & A \link{during} B       & $(B^- = A^-) \land (A^+ = B^+)$ \\ 
 & A \link{during\_inv} B  & $(A^- = B^-) \land (B^+ = A^+)$ \\ 

\midrule

& A \link{initiates} B    & same as A \link{begins} B  \\
& A \link{culminates} B   & same as A \link{ends} B \\
& A \link{terminates} B   & same as A \link{ends} B \\
& A \link{continues} B    & same as A \link{is\_included} B \\
& A \link{reinitiates} B  & same as A \link{is\_included} B \\ \\

\end{tabular}
\caption{Translation of the Allen's Algebra relation and TimeML temporal \& aspectual relations into basic temporal relations between interval start and end points. For an interval $I$ (an event or a time), the start point of the interval is denoted by $I^-$ and the end point is denoted by $I^+$.}
\label{tab:constraints}
\end{table}

Allen's interval algebra and point algebra (PA) are actually subclasses of a more general problem called 
\emph{Temporal Constraint Satisfaction Problems} (TCSPs)~\citep{DMP91}. In a TCSP, time-points are represented as a set of variables, and temporal information is represented by a set of unary or binary constraints over these variables, each specifying a set of permitted intervals. A special case of TCSP, \emph{Simple Temporal Problem} (STP), was also proposed and studied in~\citep{DMP91}. The input to an STP is a set of time-point variables together with constraints that bound the difference between pairs of these variables, and the output is required to be a succinct representation of the set of all possible assignments to time-point variables that meet those constraints. Because STP is solvable in polynomial time and at the same time captures much of the characteristics of temporal constraint problems in real life, it has been extensively studied and applied in a wide range of fields, such as planning, scheduling, robotics, and control theory. For more background on STP and recent extensions, interested readers are referred to~\citep{ORRZ21}.

\subsection{Temporal Information Extraction and Annotation in Text}
\label{sec:schemes}

\subsubsection{TIDES}
Because of the utility of temporal frameworks for reasoning about time, and also the relevance of time to understanding natural language, researchers in Natural Language Processing (NLP) have sought to apply temporal algebras to text understanding. This requires annotation schemes that would allow a person or a machine to annotate a text for the time-points, events, and temporal relations expressed.

Time expressions (\textbf{TIMEXes}) are sequences of tokens (words, numbers, or characters) in text that denote time, including expressions of when something happens, how often something occurs, or how long something takes. Starting from the early 2000s, researchers developed a sequence of \textbf{TIMEX} annotation schemes~\citep{Setzer:2001,Ferro:2001,Pustejovsky:2003:timeml}. This allows the annotation of expressions such as \textit{at 3 p.m.} (when), or \textit{for 1 hour} (how long). Because events are also involved in temporal relations, these approaches were extended into schemes for capturing both times and events. For example, the Translingual Information Detection, Extraction, and Summarization scheme (TIDES) \citep{Ferro:2001} integrates TIMEX2 expressions as well as a scheme for annotating events. TIDES includes annotations for temporal expressions, events, and temporal relations. TIDES uses only six temporal relation types to represent the relations between events, therefore it provides only limited temporal information from texts.

\begin{figure}[t]
\begin{center}
  \includegraphics[width=12cm]{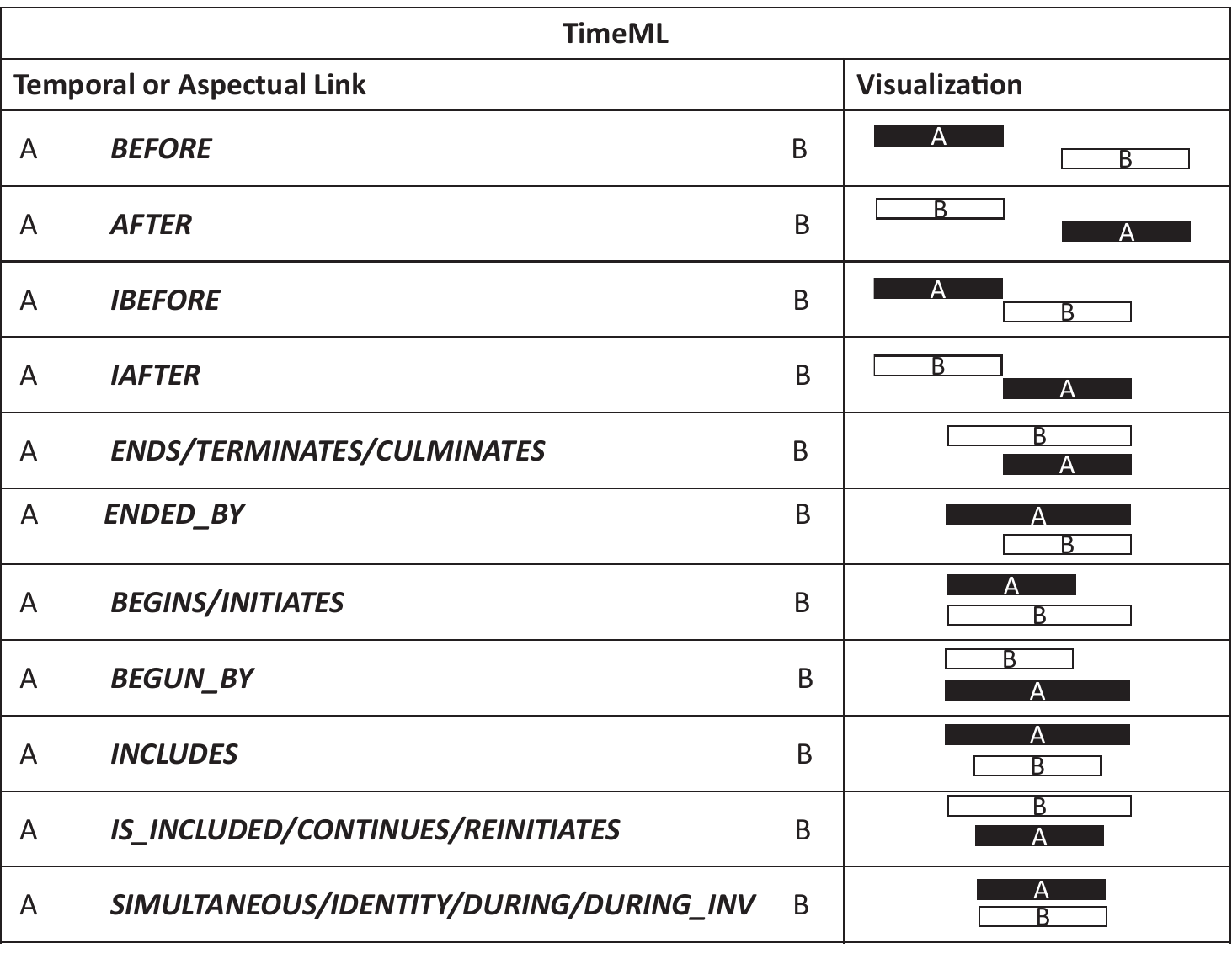}
  \caption{The 14 temporal and 5 aspectual TimeML link types. Temporal links can hold between two events, an event and a time, and two times.} 
  \label{fig:timeml}
  \end{center}
\end{figure}

\subsubsection{TimeML}
Deficiencies in TIDES led to the development of TimeML~\citep{Sauri:06}, another markup language for annotating temporal information, originally designed for news. As shown in Table~\ref{tab:constraints}, TimeML contains all Allen's basic temporal relations. TimeML does not have Allen's \link{overlaps} and \link{overlapped\_by} relations directly, however, these relations can be expressed in the temporal representation of a TimeML document by combinations of other relations \citep{llorens2015semeval}. TimeML adds additional representational facilities. First, Allen's \link{equals} relation indicates two intervals have the same start and end time. In contrast, TimeML offers more refined relations: \link{simultaneous} represents two events that happened simultaneously, \link{during} represents an event persists throughout a temporal expression (duration), and \link{identity} represents event coreference relation between two events. Furthermore, TimeML includes relations for expressing sub-event structure (aspectual relations) and relations of conditional, hypothetical, or counterfactual nature (subordinating relations). Because the input to \TLEX is a TimeML graph, we give here a detailed exposition of the representational structure of TimeML.

TimeML uses TIMEX3 expressions, which introduce additional attributes and features to capture more complex temporal expressions accurately. These features include quantifiers ("quant"), frequencies ("freq"), anchoring for relative expressions ("anchorTimeID"), and comments for additional information. While TIMEX3 retains the three basic types of temporal expressions from TIMEX - dates (DATE), times (TIME), and durations (DURATION) - it also introduces a new type, "SET," to represent recurring events (e.g., {\it twice a week}). 

TimeML has three different types of links: temporal (\TLINK), subordinating (\SLINK), and aspectual (\ALINK). \TLINKs comprise $14$ different types of temporal relations between events and times. Figure~\ref{fig:timeml} graphically shows all types of \TLINKs in TimeML. The following example represents one of the \TLINKs, called \link{is\_included}. This \TLINK represents that the event ({\it went}) \link{is\_included} in ({\it on}) the time ({\it Monday}). The visual representation of this \TLINK and the other examples can be seen in Figure~\ref{fig:visual_representation}. $1$ is the first node of the link, $2$ is the second node of the link, and $X$ is the TimeML link type.

\begin{figure}[t]
\begin{center}
  \includegraphics[width=3cm]{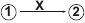}
  \caption{The visual representation of a generic link of TimeML: node $1$ and node $2$ represent two intervals, and $X$ represent the TimeML link type between the two intervals.}
  \label{fig:visual_representation}
\end{center}
\end{figure}

\ex. David \uline{\bf went}\mynum{1} to the school on \uline{\bf Monday}\mynum{2}. (\link{is\_included})

\ALINKs represent the relation between an aspectual event and its argument event. There are five types of \ALINKs (shown in Figure~\ref{fig:timeml}): \link{initiates}, \link{reinitiates}, \link{terminates}, \link{culminates}, and \link{continues}. The following example shows the aspectual relation between events, where the first event ({\it started}) \link{initiates} the second event ({\it study}). 

\ex. Mike \uline{\bf started}\mynum{1} to \uline{\bf study}\mynum{2}. (\link{initiates})

Note that some \ALINKs and \TLINKs may have the same temporal constraints (Table~\ref{tab:constraints}), but semantically they are different in TimeML. For example, \link{initiates} and \link{begins} have the same temporal constraints, however, \link{initiates} happen between an aspectual event and its argument event while \link{begins} does not require an aspectual event. 

\SLINKs are used for contexts introducing possible (modal), counterfactual, or conditional relations between two events, spanning six types: \link{modal}, \link{factive}, \link{counter\_factive}, \link{evidential}, \link{negative\_evidential}, and \link{conditional}. The \link{modal} relation introduces a relation to a possible event. In the following example, the event ({\it buy}) is merely a possibility and has not actually happened yet.

\ex. Cindy \uline{\bf promised}\mynum{1} him to \uline{\bf buy}\mynum{2} some nachos. (\link{modal})

The \link{factive} relation marks an entailment of an event's veracity. On the other hand, a \link{counter\_factive} relation marks a presupposition about the event's non-veracity. In other words, \link{factive} indicates a presupposition as to whether the event happened in the real world and \link{counter\_factive} indicates a presupposition as to whether the event did not happen in the real world. The following examples demonstrate the difference between these two SLINKs.

\ex. Katy \uline{\bf forgot}\mynum{1} that she \uline{\bf was}\mynum{2} in Miami last year. (\link{factive})

\ex. Katy \uline{\bf forgot}\mynum{1} to \uline{\bf buy}\mynum{2} some chocolate. (\link{counter\_factive}) \label{ex:cf} 

\link{evidential} relations are introduced by reporting events asserting that the argument event happened. Similarly, \link{negative\_evidential} relations are introduced by reporting events asserting that the argument event did not happen. The following examples show these two \SLINK types.

\ex. Colin \uline{\bf said}\mynum{1} he \uline{\bf went}\mynum{2} to the store. (\link{evidential}) 

\ex. Darius \uline{\bf denied}\mynum{1} that he \uline{\textbf{has coronavirus}}\mynum{2}. (\link{negative\_evidential})

Finally, the \link{conditional} relation identifies two events linked in a conditional manner, for example, with a signal such as ``if''.

\ex. If Amanda \uline{\bf marries}\mynum{1} him, she will \uline{\textbf{be happy}}\mynum{2}. (\link{conditional})

Language introducing an \SLINKs does not always entail a temporal relation, but if it does, a \TLINK representing that relation should be added to the graph as well. Furthermore, three types of \SLINKs explicitly indicate that the event is presumed to not have happened in the ``real world'' of the text. Therefore, ignoring subordinating relations or treating them as normal temporal relations gives an incorrect and impaired view of the temporal structure of the text.

Out of the three types of TimeML links, \ALINKs and \TLINKs represent temporal information in documents, while \SLINKs indicate non-temporal relations. On the other hand, \SLINKs and \ALINKs can only connect two events, while \TLINKs can relate two events, two times, or an event and a time.

A well-formed TimeML graph will have at most a single \TLINK and a single \ALINK relation between two nodes. This is a consequence of the TimeML rule, which says that between two event instances (or between an event instance and a time expression), there cannot be two or more \TLINK or \ALINK types. If there is both a \TLINK relation and an \ALINK relation, they must be temporally consistent~\citep{Pustejovsky:2003:timeml,Pustejovsky2003timebank}. As a consequence, TimeML graphs are atomic temporal graphs, hence a member of the tractable subclass of the interval algebra~\citep{Val86,Val87} (atomic graphs are called ``singleton networks'' in these papers). Another way to see this is by combining the fact that atomic temporal graphs can be equivalently transformed (in linear time) into their corresponding PA temporal graphs, and that consistency of PA temporal graphs can be checked in $O(n^2)$ time~\citep{Beek:92}.

\subsubsection{TimeML Annotations}
There are a number of manually and automatically annotated TimeML corpora. In this work, we tested our approach on four manually annotated TimeML corpora: TimeBank 1.2, the N2 corpus, the ProppLearner corpus, and the NewsReader MEANTIME corpus, all in English. The number of texts, words, events, temporal expressions, and relations are listed in Table~\ref{tab:corpora}. TimeBank 1.2 is drawn from various sources such as ABC, CNN, and the Wall Street Journal~\citep{Pustejovsky2003timebank}. The ProppLearner corpus was developed to enable the machine learning of Vladimir Propp's morphology of Russian hero tales and has 18 different layers of syntax and semantics annotated on it, including TimeML~\citep{Finlayson:2017}. Similarly, the N2 corpus is a collection of narratives relating to Islamic Extremism with 14 layers of annotation, including TimeML~\citep{Finlayson:2014}. Finally, the NewsReader MEANTIME corpus is a semantically annotated corpus of English Wikinews articles~\citep{Minard:16} (the corpus also includes TimeML annotations of article translations in Italian, Spanish and Dutch), covering four news topics: ``Airbus and Boeing'', ``Apple'', ``the Stock market'' and ``General Motors, Chrysler and Ford.''

Automatic TimeML annotation can be done via temporal information extraction systems for TimeML temporal expressions, events, and temporal relations~\citep{Verhagen:2005,Bethard:2013,Chambers:2014,Mirza:2016}, or via temporal dependency parsers~\citep{Zhang:2018}. While not perfect, these automatic TimeML annotators attain a reasonable, even good, level of performance under some conditions.  Because our goal in this paper is to present details of the \TLEX method for extracting timelines in an exact manner, we leave as future work the problem of applying our approach on automatically generated TimeML graphs. 

\begin{table*}[t]
\small
\centering
\begin{tabular}{lllllll} 
 \toprule
 \bf Corpus Name & \bf Texts & \bf Words & \bf Events & \bf TIMEXs & \bf Links & \bf Text Types \\
 \midrule 
 ProppLearner    & 15        & 18,862    & 3,438      & 142        & 2,778         & Russian Hero tales \\
 N2 Corpus       & 67        & 28,462    & 2,345      & 349        & 4,854         & Religious Texts, Magazine \\
 MEANTIME        & 120       & 13,981    & 2,096      & 525        & 1,717         & WikiNews Articles \\  
 TimeBank        & 183       & 68,555    & 7,935      & 1,414      & 9,615         & Newswire, Biography, etc. \\
 \midrule
 Total           & 385       & 129,860   & 15,814     & 2,430      & 18,964        &  \\
\end{tabular}
\caption{Summary of the corpora used in the experiments.}
\label{tab:corpora}
\end{table*}

\subsection{Prior Approaches to Timeline Extraction}
\label{sec:priorapproaches}

The goal of timeline extraction under {\it quantitative} (i.e., metric) frameworks is to produce timelines for temporal networks with metric temporal information for all times, events, and relations (i.e., numerical durations of nodes and edges in the temporal graph). Note that a solution for a quantitative framework does not immediately extend to a solution for qualitative cases such as Allen's algebra or TimeML. This is because qualitative frameworks explicitly allow the use of non-metric information in their temporal graphs; indeed, most of the temporal information in natural language is non-metrical.

In the field of qualitative spatial and temporal reasoning (QSTR), researchers have given significant attention to timeline extraction. \cite{Beek:92} presented an approach that transformed Allen's interval algebra temporal graphs into a point algebra network and solved them using backtracking techniques. \cite{Gerevini:95} built a ``timegraph'' given a set of point algebra relations and solved the timegraph using their own algorithm. Later, \cite{Wallgrun:06} presented the {\it SparQ (Spatial Reasoning done Qualitatively)} tool, which comprises a set of modules to provide different services for qualitative spatial reasoning, which is a similar problem to qualitative temporal reasoning. SparQ transforms a quantitative description of a spatial configuration into a qualitative description, applies the operations in the calculi to spatial relations, and finally performs computations on constraint networks. Similarly, \cite{Gantner:08} built the {\it Generic Qualitative Reasoner (GQR)} to solve binary qualitative constraint graphs, which takes a temporal calculus description and one or more constraint graphs as input and solves them using path consistency and backtracking. To check the consistency of large qualitative spatial networks, \cite{Sioutis:2014} presented {\it Sarissa}, which produces random scale-free-like qualitative spatial networks using the Barabási-Albert (BA) model and uses a hash table-based adjacency list to efficiently represent and reason with them. Finally, \cite{Kreutzmann:2014} showed how to use AND-OR linear programming (LP) and mixed integer linear programming (MILP) to solve qualitative graphs, a set that includes graphs represented in Allen's algebra.  

In the NLP field, researchers have explored different methods for timeline extraction in general, and from TimeML annotated texts in particular. A prominent example of general timeline extraction is the news timeline summarization (TLS) task. Starting from the seminal work of Allan and his collaborators~\citep{SA00,AGK01}, timeline summarization focuses on finding specific date mentions in texts and organizing the text or its subpart along a timeline indexed by date. TLS has attracted much attention in the past two decades; for the latest developments see~\citep{GI20} and references therein. While this is a useful task for many purposes, it is much different from (and much more granular than) the task we set ourselves here, which is organizing all the events and times mentioned in a text into a (non-metric) timeline.

With regard to approaches that seek to extract timelines from TimeML annotated texts, we can divide these methods into two categories: (1) ML-based approaches that start from raw text, or raw text annotated only with events and time expressions (i.e., no links); and (2) temporal reasoning on fully TimeML annotated document. 

For the first category, researchers build a model to parse documents, predict TLINKs, and infer the interval-based order. \cite{Mani:2006} demonstrated a machine learning method to annotate events and partially order them using a qualitative temporal graph generated using Allen's temporal relations. However, this model only predicted three relations---\link{before}, \link{after}, and \link{simultaneous}---and thus excludes large portions of TimeML, and only achieves roughly 75\% ordering accuracy. \cite{Do:2012} generated the same three relations to obtain a full ordering, achieving a similar accuracy of 73\%. In addition to the imperfect performance of these systems, in both cases the methods only consider intervals, rather than start and end points, and so lose much detailed temporal information. Finally, \cite{Kolomiyets:2012} presented two models for extracting timelines from qualitative temporal graphs, one based on shift-reduce parsing, and one based on graph parsing. Both approaches take a sequence of event words as input and produce a tree structure. Achieving 70\% accuracy, their model generated six temporal relations---\link{before}, \link{after}, \link{includes}, \link{is\_included}, \link{identity}, \link{overlap}---again, only a subset of the full possible set of temporal relations. Similarly, \cite{Zhang:2018} built a parser to extract dependency trees, which can easily be transformed into a timeline. They used LSTM in their pipeline and achieved 76\% accuracy. Later, \cite{ocal-2020} showed that temporal dependency trees omit temporal information. Using CNNs and LSTMs, \cite{leeuwenberg2020towards} demonstrated a deep learning model to predict events' start time-point, duration, and end time-point. Their model extracts three types of \TLINKs between events (\link{before}, \link{after}, and \link{overlaps}), achieving 77\% accuracy. And finally, \cite{mathur-etal-2022-doctime} built a temporal parser to extract a temporal dependency graph and maintain the temporal order of the events using contextual features such as BERT. They achieved 77\% accuracy on four \TLINKs---\link{before}, \link{after}, \link{includes}, and \link{overlaps}. As can be seen these ML-based approaches have only been applied to partial TimeML annotations (3--6/25 TimeML links) with imperfect performance (70--77\% accuracy).

In the second category---i.e., temporal reasoning-based methods---\cite{Tango} introduced Tango, a TimeML parser tool to parse the TimeML annotated documents and create a TimeML graph.  Using the TIMEX values, Tango displays the graph in a timeline form, where each section of the timeline contains a TIMEX and all the events connected to the TIMEX, however, it doesn't provide the global order of events. Although Tango checks the consistency of the TimeML graphs using path consistency, it didn't report any inconsistencies for their test corpus (the TimeBank corpus). Similar to Tango, \cite{TBOX} introduced TBOX, a TimeML parser. TBOX also generates a TimeML graph from a TimeML annotation, but it further removes the temporal closure links to display a simplified TimeML graph. TBOX displays each event in a box shape and places each box based on the temporal relation to present the timeline (e.g., if event $A$ is \link{before} event $B$, then box$_A$ would be on the left of box$_B$). However, this representation is problematic, considering temporal indeterminacy is already high in TimeML annotations~\citep{TBOX}. Although these two tools use all 14 types of \TLINKs, they ignore \SLINKs and \ALINKs. Therefore, many events would not be correctly displayed in the timeline structure.

Additionally, \cite{ning-etal-2018-multi} introduced a multi-axis annotation framework aimed at categorizing intention, opinion, hypothetical, generic, and negation events into distinct axes to distinguish them from static and recurring events. This innovative annotation approach led to the development of the MultiAxis Temporal Relations for Start-points (MATRES) corpus.

Summarizing these approaches for timeline extraction, we see that on the one hand, QSTR approaches transform Allen's interval algebra graphs into constraint graphs and solve them using constraint satisfaction techniques. Although these approaches provide a solution for solving qualitative temporal graphs, their methods cannot be applied directly to TimeML graphs because of subordinating relations, and they also do not detect or represent indeterminacy, nor do they help with correcting inconsistencies. ML-based approaches, on the other hand, have been limited by the number of relations considered and provide inexact solutions, presumably because of the noise inherent in a statistical solution. Finally, the TimeML parsers do not provide global orders for all events since they ignore aspectual and subordinating events.

\TLEX addresses both of these sets of deficiencies by drawing on both QSTR and NLP to provide a technique for extracting {\it exact} timelines from {\it full} TimeML graphs. Unlike prior approaches, \TLEX provides a timeline extraction method using {\it all} temporal relations, including aspectual and subordinating relations. It also exposes indeterminacy in ordering, which is a natural consequence of the ambiguity of natural language, and also clearly identifies the sources of inconsistency to enable manual correction. Expanding \cite{ning-etal-2018-multi}'s comprehensive multi-axis annotation framework, \TLEX creates a trunk-and-branch timeline structure that also effectively segregates counter-factive and negative evidential events from real-life events.

\section{TLEX Algorithms}\label{sec:approach}

\subsection{Overview of our approach}

\subsubsection{Problem statement}

Recall that our task is to, by taking all possible temporal relations (including aspectual and subordination relations) into consideration, extract from a TimeML annotated text a complete set of timelines that are consistent with the TimeML temporal graph, identify which portions of the timelines are indeterminate, and organize those timelines into a trunk-and-branch structure.


Without loss of generality, we may view the input TimeML temporal graph to {\TLEX} as a connected\footnote{Because if the TimeML temporal graph is not connected, then we can simply run {\TLEX} on each of its connected components separately.}, \emph{directed} graph $G_{TM}=(V_{TM}, E_{TM})$, where the vertex set $V_{TM}$ are events and time expressions, and there is a directed edge from vertex $u$ to vertex $v$ if there is a TimeML link from $u$ to $v$ (note that there can be more than one link, hence multiple relations corresponding to a single edge of $G$). That is, $E_{TM}=\{(u, v): u, v\in V_{TM} \text{ and there is a \TLINK, \SLINK, or \ALINK from $u$ to $v$}\}$.\footnote{A simple way to implement this would be to assign a $10$-bit vector for each edge as follows. First use $\bf{0}$ vector to denote no relation exists for each of the three types of links. The type of {\TLINK} relation of an edge can thus be represented with $\lceil \log_{2}(14+1)\rceil=4$ bits; similarly,  {\SLINK} relation needs $\lceil \log_{2}(5+1)\rceil=3$ bits and  {\ALINK} relation needs $\lceil \log_{2}(6+1)\rceil=3$ bits.} Let $n=|V_{TM}|$ be the number of intervals in the input TimeML temporal graph and $m=|E_{TM}|$ be the number of edges of $G_{TM}$. As $G_{TM}$ is connected, $m\geq n-1$; and since there can be at most one link of each type between an (ordered) pair of temporal intervals, the number of links in the TimeML temporal graph is at most $3m$.

\subsubsection{Structure of the algorithm}

\TLEX consists of the following five steps:

\begin{enumerate}

\item \textsc{Partitioning}, in which the input TimeML temporal graph is partitioned into subgraphs connected with only temporal and aspectual links;

\item \textsc{Transforming}, in which each TimeML temporal subgraph is transformed into a PA constraint subgraph with time-points as vertices and basic temporal relations ($<$, $=$) as edge labels;

\item \textsc{Consistency Checking}, in which the consistency of each PA constraint subgraph is checked, a maximal list of inconsistent cycles is output for manual correction and only consistent PA constraint subgraphs are passed on to the next step;

\item \textsc{Timeline Generation}, in which the \emph{minimum} timeline for all the time-points in each PA constraint subgraph is generated;

\item \textsc{Indeterminacy Detection}, in which each pair of time-points in the generated timeline can be checked whether there is any ordering indeterminacy\footnote{More precisely, by \emph{indeterminacy}, we mean the possibility of generating some other timeline in which the ordering of the two time-points is reversed.} between them.
\end{enumerate} 

\subsubsection{A running example}

In the following sections we provide detailed descriptions and analysis of each step, using the following text as a running example: 

\ex.
\label{ex:running}
After \uline{working}\mynum{1} in the \uline{morning}\mynum{2} John \uline{came back}\mynum{3} home. He \uline{wished}\mynum{4} that he \uline{didn't go}\mynum{5} to work, so he could \uline{be happy}\mynum{6}. Later, his phone \uline{rang}\mynum{7} and he \uline{answered}\mynum{8} it. As soon as he \uline{picked up}\mynum{9}, Paul \uline{started}\mynum{10} \uline{complaining}\mynum{11}. He was quickly \uline{bored}\mynum{12}, but \uline{realized}\mynum{13} that if he \uline{hung up}\mynum{14}, Paul \uline{would be mad}\mynum{15} for the entire \uline{weekend}\mynum{16}. So he \uline{continued}\mynum{17} \uline{to listen}\mynum{18}. Then he \uline{remembered}\mynum{19} he \uline{ignored}\mynum{20} Paul's text in the \uline{morning}\mynum{2}.

In this example, each event is underlined and given a numerical subscript for reference. One thing notable about this example text is that events 14 and 15 did not happen in the ``real world'' of the story: because John did not \uline{hang up}\mynum{14} the phone, Paul didn't \uline{get mad}\mynum{15}. These two events are related to the ``real world'' timeline by subordinating links and should be isolated onto their own ``subordinated'' timeline.

We informally define a ``main'' or ``real world'' timeline as a timeline that contains the events and times that actually happen in the ``world'' described in the text. Similarly, we informally define ``subordinated'' timelines as containing events that did not occur in the world described in the text. As noted, in the example text, the events \uline{hang up}\mynum{14} and \uline{would be mad}\mynum{15} belong to a subordinated timeline. It is possible that there are multiple main timelines in a text; we treat this possibility formally in \S\ref{sec:maintimelines}.

\subsubsection{Depth First Search (DFS)}
We will need the standard DFS (depth first search) algorithm to explore a directed graph and detect cycles in the graph. For completeness, we provide the notation and the pseudo-code of the DFS version presented in Section 20.3 of \cite{CLRS22}, omitting the $time$ field as we do not use it.

Specifically, on an input directed graph $G=(V, E)$ stored as an adjacency list, DFS initially colors all vertices as \textsc{white}. When DFS begins to explore the neighborhood of a vertex $u$ (by recursively calling \textsc{DFS-Visit} on each vertex $v$ which is in $u$'s adjacency list $G.Adj[u]$ and whose color is still \textsc{white}), it changes $u$'s color to \textsc{gray}. When DFS finishes examining $u$'s neighbors, it changes $u$'s color to \textsc{black} and the color of $u$ will never be changed again after that. Apart from color, each vertex also has a field $\pi$, which is used to store its parent node in the DFS forest constructed in the process of graph exploration. It is known that DFS runs in time linear time, namely $O(n+m)$ where $n=|V|$ is the number of vertices and $m=|E|$ is the number of edges of the graph $G$.
 
\begin{algorithm}[ht]
\caption{\textsc{DFS}$(G)$} \label{alg:DFS}
\begin{algorithmic}[1]
\small
    \Require directed graph $G=(V, E)$
    \Ensure a DFS forest
    \ForAll{$v \in V(G)$} 
    	\State $u.color = \textsc{white}$
    	\State $u.\pi = \textsc{nil}$
    \EndFor
    
    \ForAll{$v \in V(G)$}
    	\If {$u.color == \textsc{white}$}
    		\State \textsc{DFS-Visit}$(G, u)$
	\EndIf
    \EndFor
    
\vspace{1em}
\hspace{-4em}{\bf\textsc{DFS-Visit}$(G, u)$}
\vspace{0.5em}
\State $u.color= \textsc{Gray}$
\ForAll{$v \in G.Adj[u]$}
	\If {$v.color == \textsc{white}$}
    		\State $v.\pi = u$
    		\State  \textsc{DFS-Visit}$(G, v)$
	\EndIf
\EndFor
\State $u.color = \textsc{black}$
\end{algorithmic}
\end{algorithm}

\subsection{Partitioning}\label{sec:partitioning}

In this step, \TLEX partitions the TimeML graph into subgraphs that are connected only with temporal and aspectual relations. The reason is that temporal and aspectual relations represent temporal information about the text, while subordinating relations do not. 

To do this, we first transform the input graph $G_{TM}=(V_{TM}, E_{TM})$ into an \emph{undirected} graph $G'_{TM}=(V_{TM}, E'_{TM})$ by keeping the same vertex set and adding an (undirected) edge $(u,v)$ between two vertices $u$ and $v$ if there is a temporal or aspectual relation from $u$ to $v$, or a temporal or aspectual relation from $v$ to $u$. That is, we ignore all \SLINK relations and view \TLINK and \ALINK relations as undirected edges. This can clearly be done in $O(n+m)$ time. Next, we run DFS as illustrated in Algorithm~\ref{alg:DFS} on $G'_{TM}=(V_{TM}, E'_{TM})$ and partition it into connected subgraphs. Since the transformed graph $G'_{TM}=(V_{TM}, E'_{TM})$ has $|E'_{TM}|\leq |E'_{TM}|$, the DFS runs in $O(n+m)$ and hence the total running time of \textsc{Partitioning} is also $O(n+m)$. We denote the resulting subgraphs as $G^{(1)}_{TM}, \ldots, G^{(k)}_{TM}$.

Figure~\ref{fig:example-timeml-graph} illustrates \textsc{Partitioning} on the TimeML graph corresponding to Example~\ref{ex:running}. Subgraphs are separated with dashed lines. Since there are no temporal or aspectual links between event pairs 5 \& 6 and 14, 15 \& 16 with the rest of the graph (but the graph is otherwise connected), \TLEX partitions the graph into three temporally connected subgraphs: the main, or ``real world'' subgraph (1-2-3-4-7-8-9-10-11-12-13-17-18-19-20) and the subordinated subgraphs (5-6 \& 14-15-16). \TLEX will subsequently extract a separate timeline from each of these subgraphs.

After the partitioning, \TLEX stores the \emph{connecting points} between the main subgraphs and subordinated subgraphs (these points can be easily found by scanning the subordinating links: if a subordinating link $(u, v)$ connecting one vertex from a main subgraph and one vertex from a subordinated subgraph, then both vertices $u$ and $v$ are connecting points). Connecting points will be used later to build the trunk-and-branch timeline. The connecting points in Example~\ref{ex:running} are 4 \& 5 and 13 \& 14.

\begin{figure}[ht] 
\begin{center}
  \includegraphics[width=\textwidth]{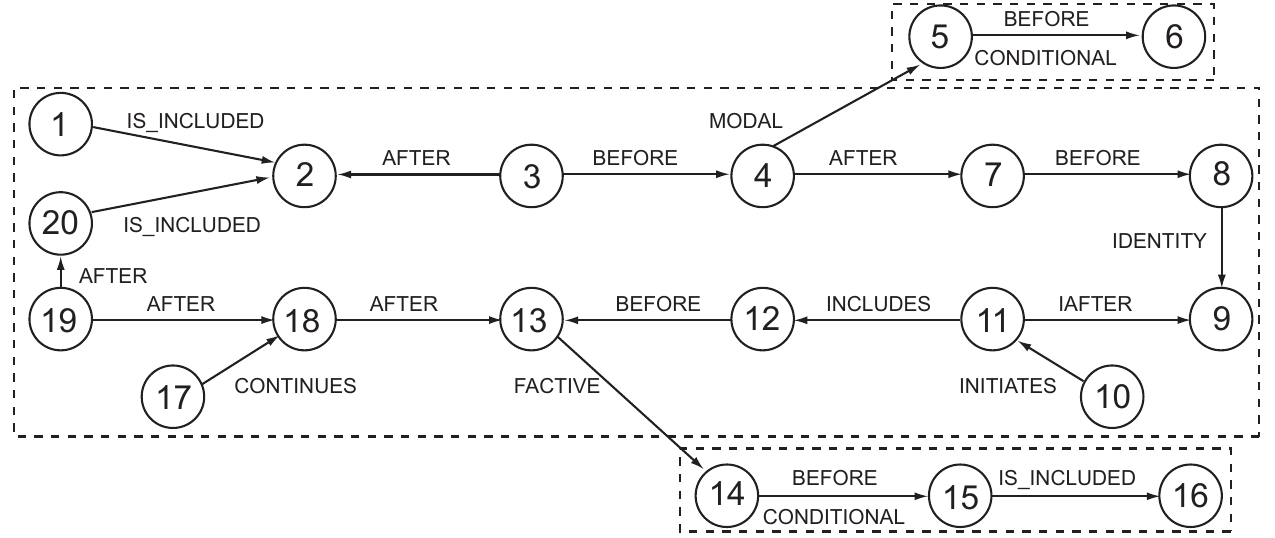}
  \caption{Visualization of the TimeML graph from the example. Numbers correspond to the events in the text, and arrows correspond to the temporal,  aspectual, or subordinating links. The two temporally and aspectually connected subgraphs are separated by dashed lines,  and links on the ``real world'' timeline are bolded.}
  \label{fig:example-timeml-graph}
\end{center}
\end{figure}

\subsection{Transforming} \label{sec:transforming}


In TimeML,  DATE, DURATION, and SET temporal expressions are regarded as temporal intervals, with starting and ending time points.\footnote{Note that \link{reinitiates} occurs between a continuous event and a reinitiating trigger word (such as \textbf{resuming} or \textbf{renewing}). If the event is not continuous, then there will be two different event instances instead of one, and the second event instance will be initiated by the trigger word.} TIME temporal expressions represent a specific moment in time, such as {\it 9:46 pm} or {\it 1 am}. Unlike intervals, they don't have a duration and simply pinpoint a single time-point in PA \citep{Bartak:2014}. We transform each temporal \emph{intervals} and time stamps of TimeML subgraph $G^{(i)}_{TM}$  into a point algebra (PA) constraint graph $G^{(i)}_{PA}=(V^{(i)}_{PA}, E^{(i)}_{PA})$, for every $i=1,\ldots, k$.

Let $G^{(i)}_{TM}=(V^{(i)}_{TM}, E^{(i)}_{TM})$ be any of the interval TimeML subgraphs. Note that we still view $G^{(i)}_{TM}$ as a \emph{directed} graph following the same definition as the original graph $G_{TM}$; that is, there is a directed edge from vertex $u$ to vertex $v$ in $G^{(i)}_{TM}$ if and only if there is a \TLINK relation or an \ALINK relation between them.

Recall that each vertex $v\in V^{(i)}$ actually represents a temporal interval $I_v$, so we first replace each interval $v$ with two time-points, the start time-point $t^-_v$ and the end time-point $t^+_v$ of the interval $I_v$. More formally, we let 
\[
V^{(i)}_{PA}=\{t^-_v: v\in V^{(i)}_{TM}\} \cup \{t^+_v: v\in V^{(i)}_{TM}\}.
\]

Now we construct the edges of $G^{(i)}_{PA}$ (i.e., basic PA constraints) as follows. First, we add the constraint $t^-_v < t^+_v$ for every $v\in V^{(i)}$. Using the fact that each \TLINK or \ALINK can be represented as a simple conjunction of the basic temporal constraints {\it less than} (\textless) and {\it equals} (=) as shown in Table~\ref{tab:constraints}, we next, for each directed edge in $E^{(i)}_{TM}$, add its corresponding basic PA constraints to $E^{(i)}_{PA}$. More formally, if we use $\phi(u, v, t^{+}_{u}, t^{-}_{u}, t^{+}_{v}, t^{-}_{v})$ to denote the set of PA constraints, as specified in Table~\ref{tab:constraints}, between $\{t^{+}_{u}, t^{-}_{u}\}$ and $\{t^{+}_{v}, t^{-}_{v}\}$ which are imposed by the TimeML constraint of an edge $(u, v)$ in $G^{(i)}_{PA}$, then
\[
E^{(i)}_{PA}= \left(\cup_{v \in V^{(i)}_{TM}} \{t^-_v < t^+_v\} \right) \bigcup 
\left( \cup_{(u, v) \in E^{(i)}_{TM}} \phi(u, v, t^{+}_{u}, t^{-}_{u}, t^{+}_{v}, t^{-}_{v}) \right).
\]

It is easy to check that the blowup of PA constraint graph size is at most a constant factor, namely $|V^{(i)}_{PA}|=2|V^{(i)}_{TM}|$ and $|E^{(i)}_{PA}| \leq |V^{(i)}_{TM}| + 2|E^{(i)}_{TM}|$. In the following we write 
\[
n_i :=|V^{(i)}_{PA}|
\]
for the number of vertices in the $i^{\text{th}}$ PA constraint subgraph and 
\[
m_i := |E^{(i)}_{PA}|
\]
for the number of edges in the $i^{\text{th}}$ PA constraint subgraph. Note that $\sum_{i=1}^{k} n_i =2n$ and $\sum_{i=1}^{k} m_i \leq n+2m$. It follows that \textsc{Transforming} takes linear time $O(n+m)$ to compute. While certain temporal details like verb tense and specific linguistic signals are lost during the transformation of the TimeML graph to the PA graph, the information pertaining to the global order of events and times remains intact. 

Figure~\ref{fig:example-constraint-graph} illustrates the transformed PA constraint graph from the TimeML temporal graph shown in Figure~\ref{fig:example-timeml-graph}. 

\begin{figure}[ht] 
\begin{center}
  \includegraphics[width=\columnwidth]{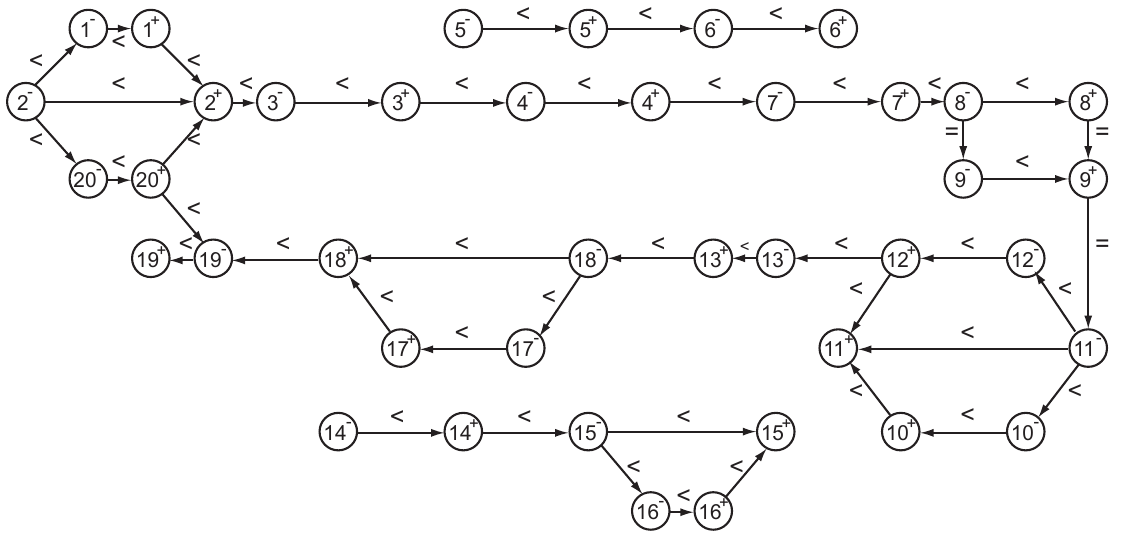}
  \caption{The three PA constraint subgraphs transformed from the three temporally and aspectually connected subgraphs in Figure~\ref{fig:example-timeml-graph}. These are produced by replacing each node $I$ with its start and end time-points $I^-$ and $I^+$,  and replacing each temporal or aspectual link with the set of basic temporal relations shown in Table~\ref{tab:constraints}.}
  \label{fig:example-constraint-graph}
\end{center}
\end{figure}

\subsection{Consistency Checking}\label{sec:correcting}

A \emph{timeline} for a PA constraint graph $G_{PA}=(V_{PA},E_{PA})$ is a function $L$ that maps each time-point in $V_{PA}$ to a (not necessarily unique) point on the real axis such that all time-point constraints imposed by $E_{PA}$ are satisfied. A PA constraint graph $G_{PA}=(V_{PA},E_{PA})$ is \emph{consistent} if there exists a timeline $L$ for $G$. Therefore we need to check the consistency of each PA constraint subgraph before passing it on for timeline generation.

It was shown by \cite{Beek:92} that a PA constraint graph is consistent if and only if it does not have any of the following three types of inconsistent cycles.

\begin{definition}[Inconsistent Cycle]
\label{def:inconsistentcycle}
An inconsistent cycle is a cycle in a PA constraint graph with one of the following types (for simplicity of notation, we use ``$\neq$'' as shorthand for the relation ``$<$'' or ``$>$'' and use ``$\leq$'' as shorthand for the relation ``$=$'' or ``$<$'') :
\begin{enumerate}[label=(\roman*)]
    \item $v = ... = w \neq v$;
    \item $v \leq ... \leq w \leq ... \leq v \neq w$;
    \item $v < ... < w < ... < v$, where all but one of the $<$ can be $\leq$ or $=$ (Note that this case also covers the case of self-loops, $v < v$).
\end{enumerate}
\end{definition}

An example of an inconsistent cycle is shown in Figure~\ref{fig:inconsistent_cycle}.

\begin{figure}[hb]
\begin{center}
  \includegraphics[width=0.3\textwidth]{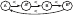}
  \caption{Example of an inconsistent cycle generated by A \link{before} B and B \link{before} A.} 
  \label{fig:inconsistent_cycle}
\end{center}
\end{figure}

In the third step of  \TLEX, namely \textsc{Consistency Checking}, given a PA constraint subgraph $G^{(i)}_{PA}=(V^{(i)}_{PA}, E^{(i)}_{PA})$, our task is to check if $G^{(i)}_{PA}$ is consistent; moreover if $G^{(i)}_{PA}$ is inconsistent, output a \emph{Maximal List of Inconsistent Cycles} (MLIC) which includes the set of all inconsistent cycles detected in our checking procedure. This list of TimeML links that correspond to these points can then be provided to human annotators for manual corrections. Although there has been some research in the field of qualitative spatial reasoning focused on automatically correcting inconsistencies in point algebra constraint graphs in a minimum or approximate manner~\citep{Iwata:2013,Condotta:2016}, these approaches cannot be applied to TimeML graphs. Inconsistencies in TimeML graphs usually come from incorrect or incomplete annotations; more rarely, there may be fundamental inconsistencies in the meaning of the language. Therefore, errors need to be fixed by reference to the original text. It is possible that one could develop other knowledge- or inference-based methods for automatically correcting these inconsistencies, but we will leave the exploration of this for future work. This means that any inconsistencies discovered by \TLEX need to be manually corrected before proceeding. We discuss more specific cases of inconsistencies in selected corpora in Section~\ref{sec:evaluation}.

\textsc{Consistency Checking} comprises the following three parts.

\subsubsection{Merging equal time-points into ``compound'' time-points}

In this part, we merge time-points that are inter-connected by {\it equals} (=) relations. Specifically, we keep only the $=$ relations in the PA constraint subgraph and view it as an \emph{undirected} graph. We then run DFS on the modified graph to partition the vertices in $V^{(i)}_{PA}$ into connected components $V_1, \ldots, V_{\ell}$ (Note that some $V_i$ may contain a single vertex; indeed that should be the case for most $V_i$'s when the PA constraint graph is generated from a real-world text corpus). Time-points in each $V_i$ are inter-connected by $=$ relations and hence they should be equal to each other. We call each such $V_i$ a ``compound'' time-point. Since running DFS on (a subgraph of) each PA constraint subgraph takes time at most linear in its size, the total running time for processing all PA constraint subgraphs takes $O(m+n)$ time.

In the example, PA constraint graph in Figure~\ref{fig:example-constraint-graph}, $8^-$ and $9^-$ will be merged into one ``compound'' time-point, $8^+, 9^+$ and $11^-$ will be merged into another ``compound'' time-point, and the remaining time-points will be singleton ``compound'' time-points.

\subsubsection{Building ``compound'' PA constraint graph, detecting type (i) and type (ii) inconsistency}

We now build a ``compound'' PA constraint graph $\tilde{G}^{(i)}_{PA}=(\tilde{V}^{(i)}_{PA}, \tilde{E}^{(i)}_{PA})$: the vertex set $\tilde{V}^{(i)}_{PA}$ is just the set of ``compound'' time-points created in the previous part, and the labels of the edges will be of {\it less than} (\textless) relation only and can be generated by scanning the set of {\it less than} (\textless) labeled edges in  $E^{(i)}_{PA}$. More specifically, we first associate a new field ``compound index'' (an integer between $1$ and $\ell$) to each vertex in $V^{(i)}_{PA}$, indicating to which ``compound' time-point the corresponding time point belongs. We then create an $\ell \times \ell$ Boolean-valued table $T$ whose $(i, j)$-entry stores whether ``compound'' time-point $i$ is {\it less than} ``compound'' time-point $j$. We now go through all {\it less than} (\textless) labeled edges in  $E^{(i)}_{PA}$ and update the entries of $T$ accordingly. During this process, we also detect inconsistency cycles as follows:

\begin{description}

\item [detecting type (i) inconsistency:]
If there is a constraint $u<v$ and $u, v$ both belong to the same ``compound'' time-point, then a type (i) inconsistency cycle is found, and we add all time-points in this ``compound'' time-point to the MLIC;

\item [detecting type (ii) inconsistency:]
For every constraint $u<v$ where $u$ belongs to ``compound'' time-point $i$ and $v$ belongs to ``compound'' time-point $j$, we update the $(i, j)$-entry of $T$ to \textsc{TRUE}. We then check the $(j, i)$-entry of $T$ and a type (ii) inconsistency cycle is detected if that entry is also \textsc{TRUE}. If so, we add all time-points in both ``compound'' time-point $i$ and ``compound'' time-point $j$ to the MLIC.

\end{description}

Note that now the ``compound'' PA constraint graph has $\ell\leq n_i$ vertices, at most $m_i$ directed edges and all edges are label with  {\it less than} (\textless) relations. As we need to build the table $T$, the running time is $O(n^2+m)$.

\subsubsection{Cycle-finding in ``compound'' PA constraint graph---detecting type (iii) inconsistency}

To detect type (iii) inconsistency in $\tilde{G}^{(i)}_{PA}=(\tilde{V}^{(i)}_{PA}, \tilde{E}^{(i)}_{PA})$, we apply the well-known cycle-detection method of DFS. Run the DFS described in Algorithm~\ref{alg:DFS} on $\tilde{G}^{(i)}_{PA}$ with the following slight modifications: if when vertex $u$ explores its neighboring vertices in Line 9 and finds that $v$'s color is \textsc{gray}, then a cycle is detected. Proof of the correctness of this method is well-known, but for completeness we provide a sketch here. Let $v_1 < v_2 < \cdots < v_p < v_1$ be a cycle in $\tilde{G}^{(i)}_{PA}$ and without loss of generality $v_1$ is the first vertex to be discovered by DFS. By the White-path Theorem,~\citep{CLRS22} Theorem 20.9, $v_p$ is a descendant of $v_1$ in the DFS forest generated by the algorithm; it follows that when $v_p$ is discovered in Line 12, the color of $v_1$ is still \textsc{gray}, hence the cycle will surely be detected. On the other hand, if a vertex $u$ finds that $v$'s color is \textsc{gray} when exploring its neighbors, then $v$ is an ancestor of $u$, meaning that there is a directed path from $v$ to $u$. But $v$ is a neighbor of $u$ implies there is an edge from $u$ to $v$. Therefore there must be a cycle containing both $u$ and $v$.

To detect all possible type (iii) inconsistency cycles, each time a cycle is detected we remove edge $(u,v)$ from $\tilde{G}^{(i)}_{PA}$ to break the cycle, add the cycle to the MLIC (the cycle can be found by tracing back recursively the parent vertex $u.\pi$ starting from $u$ until we reach vertex $v$), and keep running the DFS algorithm till it finishes. The running time of this part is $O(m+n)$

\paragraph{Summary of \textsc{Consistency Checking}.}

From our analysis in each of the three parts, the overall running time of \textsc{Consistency Checking} is clearly $O(m+n^2)$. Note that, if the algorithm does not detect any inconsistency in the ``compound'' PA constraint subgraph $\tilde{G}^{(i)}_{PA}$ and passes it on to next step, then the ``compound'' PA constraint subgraph satisfies

\begin{enumerate}

\item $\tilde{G}^{(i)}_{PA}$ is a directed acyclic graph (DAG);

\item All the constraints are {\it less than} (\textless) relations; that is, if $(u, v)$ is an edge in $\tilde{G}^{(i)}_{PA}$ then for any timeline $L$ satisfying $\tilde{G}^{(i)}_{PA}$, $L(u) < L(v)$.

\end{enumerate}

\subsection{Timeline Generation}
\label{sec:solving}

\begin{figure}
\begin{center}
  \includegraphics[width=0.5\textwidth]{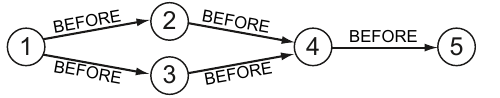}
  \caption{A TimeML graph with an indeterminacy. While the order between interval $I_4$ and interval $I_5$ is fixed, 
  the relative order between interval $I_2$ and interval $I_3$ is indeterminate.}\label{fig:old_example-indeterminate-graph}
\end{center}
\end{figure}

\subsubsection{Normal Form Timeline}

The two properties listed at the end of last section of the ``compound'' PA constraint subgraph allow ordering \emph{linearly}, namely topological sort, all the time-points involved. Recall that a \emph{topological sort} of a directed acyclic graph $G=(V,E)$ is a linear ordering of all its vertices such that if $G$ contains an edge $(u, v)$, then $u$ appears before $v$ in the ordering. However, our goal in \textsc{Timeline Generation} is slightly different: for every ``compound'' PA constraint subgraph $\tilde{G}^{(i)}_{PA}$, we want to generate a \emph{shortest} timeline for time-points in $\tilde{G}^{(i)}_{PA}$, for which we provide a more formal definition now. We prioritize generating the shortest timeline among potentially multiple solutions to establish a reference point for subsequent indeterminacy calculations. This allows for an equitable distribution of indeterminacy scores across all timeline segments, as detailed in \S~\ref{sec:indeterminacy}. To keep notation simple, in the rest of this section we use a general PA constraint graph $G=(V,E)$ instead of $\tilde{G}^{(i)}_{PA}$ as the input to \textsc{Timeline Generation}.

First, in order to make notation simple and eliminate unnecessary ambiguities, we require the values of a timeline to be positive integers, hence the notion of \emph{normal form timelines}. Let $\N=\{1, 2, \ldots\}$ denote the set of natural numbers.

\begin{definition}[Timeline and normal form timeline]
Let $G=(V,E)$ be a PA constraint graph. A function $L: V\to \R$, which maps every time-point in $V$ to a real number, is called a \emph{timeline} of $V(G)$ if all the temporal constraints imposed by $E$ are satisfied. A timeline is said to be in \emph{normal form} if the range of the timeline $L$ is further restricted to be $\N$. The \emph{length} of a normal form timeline is the maximum value of the timeline $\ell(L)=\max_{u\in V} L(u)$. A timeline $L$ is called a \emph{minimal normal form timeline} of $G$ if for any normal form timeline $L'$ of $V(G)$, $\ell(L) \leq \ell(L')$. Finally, a timeline $L$ is called a \emph{minimum normal form timeline} if for any normal form timeline $L'$ of $V(G)$ and every vertex $u$ in $V$,  $L(u) \leq L'(u)$.
\end{definition}

It follows immediately from the definitions that if a timeline $L$ is a minimum normal form timeline then it is also a minimal normal form timeline. However, the converse is in general not true. The motivation behind our definitions is the following. It is possible that there are multiple normal form timelines of a PA constraint graph that all have the minimum possible length. In order to eliminate such ambiguity, we further require the timeline should map every time-point to its minimum possible value in any consistent timeline. Indeed, one can show that for any PA constraint graph $G$, the minimum normal form timeline for $G$ is unique\footnote{The uniqueness follows by inspecting the Greedy Kahn's algorithm (Algorithm~\ref{alg:Kahn}) and proving the unique optimality of its output timeline by an induction on time-points that are added to set $S$ at iteration $t$.} and hence it is justified to call a timeline \emph{the} minimum normal form timeline for $G$.

Consider the example shown in Figure~\ref{fig:old_example-indeterminate-graph}, one possible normal form timeline (after transforming the original Allen algebra on temporal intervals into PA on time-points) $L_1$ is to map interval $I_1$ to $[1, 2]$, interval $I_2$ to $[3, 4]$, interval $I_3$ to $[5, 6]$, interval $I_4$ to $[7, 8]$, and interval $I_5$ to $[9, 10]$, with $\ell(L_1)=10$. Another normal form timeline $L_2$, as shown in Figure~\ref{fig:old_example-indeterminate-timeline}, maps interval $I_1$ to $[1, 2]$, both interval $I_2$ and interval $I_3$ to $[3, 4]$, interval $I_4$ to $[5, 6]$, and interval $I_5$ to $[7, 8]$, with $\ell(L_2)=8$. It is not hard to be convinced that $L_2$ is the minimum normal form timeline for the original TimeML graph.

\begin{figure}[h]
\begin{center}
  \includegraphics[width=0.6\columnwidth]{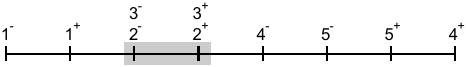}
  \caption{The minimum timeline extracted from the TimeML graph in Figure~\ref{fig:old_example-indeterminate-graph}, with the indeterminate section highlighted.}
  \label{fig:old_example-indeterminate-timeline}
\end{center}
\end{figure} 

\TLEX assumes, in the absence of additional information, that the minimum normal form timeline is the best.

\subsubsection{Minimum normal form timeline generation algorithm}

\paragraph{Kahn's topological sort algorithm.}

We first review the classic Kahn's algorithm~\citep{Kah62} for topological sort as it is the basis of our minimum normal form timeline generation algorithm. It is helpful to view topological sorting vertices in $V$ as a scheduling problem (for example, taking courses to complete an undergraduate degree), and think edges in $E$ as scheduling constraints (if there is an edge from $u$ to $v$, it means course $u$ is a prerequisite for course $v$). Kahn's topological sort algorithm for an acyclic directed graph (DAG) $G=(V,E)$ is to repeatedly find a vertex of in-degree $0$, output it, and remove it and all of its outgoing edges from the graph. The key observation of the algorithm is that, as long as the in-degree of a vertex becomes $0$, it is free, and we can safely schedule it. Once it is scheduled, the constraints it imposes on other vertices can be removed, and this potentially frees more vertices to schedule. The correctness of the algorithm follows from the fact that there always exists some vertex of in-degree $0$ in a DAG.\footnote{To see this, suppose this is not the case. Then every vertex has at least one incoming edge. Now starting from any vertex $u$ of $G$, repeatedly trace along (in the opposite direction) any of its incoming edges. Since every vertex has at least one incoming edge, this process will keep running. But graph $G$ is finite, therefore the path of this process must eventually visit some vertex it has visited before. That is, there must exist a cycle in this path. Reversing the direction of each edge in this path gives rise to a cycle in the original graph $G$, thus a contradiction.} Note that removing a vertex of in-degree $0$ and deleting all its outgoing edges from a DAG can never create new cycles; therefore, the algorithm will proceed until all vertices are put in a topological ordering. The running time of Kahn's algorithm is $O(m+n)$ by the following analysis. First, we maintain a queue which is set to be empty initially for the vertices of in-degree $0$. Second, in the preprocessing stage, by scanning all edges in $E$, we can compute the in-degree of every vertex, and place all in-degree $0$ vertices in the queue; this takes $O(m+n)$ time. Last, after removing a vertex $u$ of in-degree $0$ from the queue, we go through $u$'s outgoing edges, delete them from $E$, and update the in-degrees of $u$'s out-neighbors, and insert any vertex into the queue if its in-degree becomes $0$. Note that we need to process each edge only once
and process each vertex $O(1)$ times in the whole algorithm, so this part also takes $O(m+n)$ time.

\paragraph{Greedy Kahn's algorithm for minimum normal timeline}

Note that the Kahn's algorithm described above already provides a normal form timeline for time-points in $V$.  What we do next is to slightly modify the algorithm so that it outputs the minimum normal form timeline.

Since a normal form timeline can only schedule time-points at $t=1, 2, \ldots$, we partition our timeline construction into stages accordingly: at stage $i$, the algorithm determines which time-points the timeline $L$ should map to value $i$. A simple idea to construct a \emph{minimum} normal form time is, therefore to \emph{greedily} schedule as many time-points as allowed by the PA constraint graph. What is the maximal set of time-points that can be scheduled first, i.e. mapped to value $1$? Clearly, since all time-points with in-degree $0$ do not have any constraint on them, thus can all be mapped to value $1$. Following the idea of Kahn's algorithm, after mapping these time-points to $1$, we delete their out-going edges (this will create some new time-points of in-degree $0$ as argued in the preceding paragraph) and map all new in-degree $0$ time-points to value $2$. We keep repeating this process until all time-points have been scheduled.

\begin{algorithm}[ht]
\caption{\textsc{Greedy Kahn's Algorithm}} \label{alg:Kahn}
\begin{algorithmic}[1]
\small
    \Require A consistent PA constraint graph $G=(V, E)$
    \Ensure A minimum normal form timeline $L: V \to \N$
    \State $t = 0$
    \ForAll{$v \in V(G)$} 
    	\State $d_{in}(v)=0$
    \EndFor
    \State $S=\emptyset$
    \ForAll{$(u,v) \in E(G)$} 
    	\State $d_{in}(v)=d_{in}(v)+1$ \Comment{compute the in-degree of every time-point}
    \EndFor
    \While{$V \neq \emptyset$}
    	\State $t=t+1$
    	\State $S=\{u\in V: d_{in}(u)=0\}$ \Comment{set $S$ contains all currently ``free'' time-points}
    	\ForAll{$u \in S$}
    		\State remove vertex $u$ from $V$ 
    		\State $L(u)=t$ \Comment{greedily schedule all time-points that are ``free''}
    		\State delete all outgoing edges of $u$ \Comment{remove all constraints $u$ imposes on other time-points}
    		\State remove $u$ from $S$
    	\EndFor	
    \EndWhile
\end{algorithmic}
\end{algorithm}

The pseudo-code of our greedy Kahn's algorithm is listed in Algorithm~\ref{alg:Kahn}. It is easy to see, following the same argument as that of the original Kahn's algorithm, the running time of Greedy Kahn's algorithm is $O(m+n)$. The correctness of the algorithm follows from the following lemma.

\begin{lemma}
For any consistent PA constraint graph $G=(V,E)$, \textsc{Greedy Kahn's Algorithm} generates the minimum normal form timeline for $G$.
\end{lemma}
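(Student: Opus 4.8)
The plan is to characterize the value $L(u)$ that Algorithm~\ref{alg:Kahn} assigns to each time-point as a longest-path length, and then derive both that $L$ is a genuine normal form timeline and that it is pointwise minimal. First I would check that the algorithm halts and returns a function $L: V\to\N$. Because the input $G$ is a DAG (guaranteed by the preceding \textsc{Consistency Checking} step, whose output satisfies properties 1 and 2), every nonempty residual graph obtained after deleting already-scheduled vertices still contains a vertex of in-degree $0$; hence the set $S$ is nonempty at every iteration of the while-loop, and each vertex is eventually removed from $V$ and receives a positive integer value. For the constraint-satisfaction half, observe that for an edge $(u,v)\in E$ the in-degree of $v$ remains positive as long as $u$ has not been removed, so $v$ cannot enter $S$ before $u$ does, and $u$ is assigned a strictly smaller stage counter; thus $L(u)<L(v)$, and $L$ is a normal form timeline.

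Next I would prove the key lemma that $L(u)=\lambda(u)$, where $\lambda(u)$ denotes the number of vertices on a longest directed path of $G$ terminating at $u$ (so $\lambda(u)=1$ when $u$ is a source, and $\lambda(u)=1+\max\{\lambda(w):(w,u)\in E\}$ otherwise). This I would establish by induction on the stage counter $t$. A vertex $u$ first becomes free (attains in-degree $0$) at stage $t$ exactly when all of its predecessors have been removed by stage $t-1$ and at least one predecessor was removed precisely at stage $t-1$; by the induction hypothesis this says $\max_{(w,u)\in E}\lambda(w)=t-1$, whence $\lambda(u)=t=L(u)$. The base case is immediate, since the sources are exactly the vertices freed at stage $1$ and they have $\lambda(u)=1$.

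With this characterization in hand, minimality is short. Let $L'$ be an arbitrary normal form timeline of $G$ and fix a vertex $u$. Choose a longest path $w_1\to w_2\to\cdots\to w_{\lambda(u)}=u$ witnessing $\lambda(u)$. Since $L'$ respects every edge constraint and takes values in $\N$, we have $L'(w_1)\geq 1$ and $L'(w_{i+1})\geq L'(w_i)+1$ for each $i$, so $L'(u)=L'(w_{\lambda(u)})\geq \lambda(u)=L(u)$. As $u$ was arbitrary, $L(u)\leq L'(u)$ for every vertex, which is exactly the defining property of the minimum normal form timeline; together with the first paragraph this completes the proof.

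I expect the main obstacle to be the inductive longest-path characterization in the middle step: one must track carefully that the stage at which a vertex becomes free equals one more than the maximum stage over its predecessors, and handle the bookkeeping that an in-degree is decremented only when a predecessor is actually removed (not merely scheduled). Once that characterization is secured, both the validity of $L$ and the comparison against an arbitrary $L'$ reduce to the elementary observation that a strictly increasing chain of positive integers of length $k$ forces its final element to be at least $k$.
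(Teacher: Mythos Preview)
Your proof is correct. It differs from the paper's in that you interpose an explicit longest-path characterization: you first show $L(u)=\lambda(u)$, the number of vertices on a longest directed path ending at $u$, and then lower-bound any normal form timeline $L'$ by walking along such a path. The paper instead compares $L$ and $L'$ directly by induction on the stage index $i$ of the sets $S_i=\{u:L(u)=i\}$: assuming $L'\geq L$ on $S_1\cup\cdots\cup S_k$, it observes that any $v\in S_{k+1}$ has a predecessor $w\in S_k$, so $L'(v)\leq k$ would force $L'(w)<k=L(w)$, contradicting the hypothesis. Your route is a bit longer but yields the bonus fact that $L(u)$ equals the longest-path length to $u$, which makes the minimality transparent via the elementary chain bound; the paper's argument is more compact but leaves that structural description implicit. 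Both rest on the same core observation---that a vertex first becomes free exactly one stage after its latest predecessor---so the difference is one of packaging rather than substance.
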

\begin{proof}
Let $L$ be the normal form timeline generated by \textsc{Greedy Kahn's Algorithm} and let $L'$ be any other normal form timeline. We claim that for any $u\in V$, $L(u)\leq L'(u)$, hence by definition $L$ is the minimum normal form timeline.

Note that \textsc{Greedy Kahn's Algorithm} partitions the time-points in $V$ into $\ell$ disjoint sets, $V= \dot\bigcup_{i=1}^{\ell} S_i$, where $\dot\bigcup$ stands for disjoint set union and $S_i:=\{u\in V: L(u)=i\}$ for every $1\leq i \leq \ell$. We prove the claim by induction on the index number $i$ of $\{S_i\}$ to which a time-point belongs. The claim is certainly true for every time-point in $S_1$. Now for the inductive step, suppose the claim is true for all time-points in $S_1 \cup \cdots \cup S_k$. That is, any other timeline $L'$ must have $L'(u)\geq L(u)$ for every time-point $u$ in $S_1 \cup \cdots \cup S_k$. Let $v$ be a time-point in $S_{k+1}$. Why can not $v$ be in $S_k$? Because there must exist another (or more) time-point $w$ such that $(w, v)\in E$ and $w \in S_k$. So if $L'$ satisfies that $L'(v)=k' \leq k$, then we must also have $L'(w)<k'\leq k=L(w)$, a contradiction. This completes the inductive step of the proof and hence the proof of the lemma.
\end{proof}

Note that \cite{Beek:92} developed an efficient algorithm that finds a consistent scenario (i.e. timeline) for a PA graph by first decomposing the original graph into strongly connected components (SCCs) and then applying topological sort on the simplified PA graph. However, our application of the greedy Kahn's algorithm in solving PA graphs seems to be new.

The timeline for the running example is shown in Figure~\ref{fig:example-timeline}. As before, there are two temporally and aspectually connected subgraphs, where the connection between them (the gray bar) represents the \link{factive} subordinating link. As can be seen, this is a trunk-and-branch multi-timeline structure, where the trunk is (in this case) the ``main'' timeline and the branch is the subordinated timeline.

As we have noted, one major advantage of \TLEX is its ability to properly handle subordinating relations. Prior approaches to timeline extraction ignored subordinating relations. If they were alternatively treated as some sort of temporal relation, machine-learning-based techniques e.g.,\citep{Verhagen:2005,Bethard:2013,Mirza:2016} would likely insert events 8 and 9 between or near time-points $13^-$ and $13^+$. That would indicate events 14 and 15 happened after events 1--12 but before events 17--20, while in the story world, these events did not occur at all. This is clearly incorrect.

\begin{figure}[t]
\begin{center}
  \includegraphics[width=\textwidth]{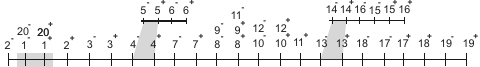}
  \caption{Visualization of the timeline extracted from Figure~\ref{fig:example-constraint-graph}. The three subgraphs are arranged into a main timeline and subordinated timelines connected by a grey branch. The indeterminate sections (20 \& 1) are highlighted.}
  \label{fig:example-timeline}
\end{center} 
\end{figure}

\subsection{Indeterminacy Detection}
\label{sec:indeterminacy}

In most cases, TimeML graphs lack enough information to uniquely specify the full ordering; a simple example is shown in Figure~\ref{fig:old_example-indeterminate-graph}. For that TimeML graph, only the orderings among the first event and the last two events in the timeline are uniquely determined; namely $1^-$\textless $1^+$\textless 2 and 3\textless $4^-$\textless $4^+$\textless $5^-$\textless $5^+$. On the other hand, the ordering between events 2 and event 3 is indeterminate. In fact, there are 13 different possible scenarios, as shown in Figure~\ref{fig:old_indeterminant-list}.

\begin{figure}[ht]
\centering
\includegraphics[width=0.6\columnwidth]{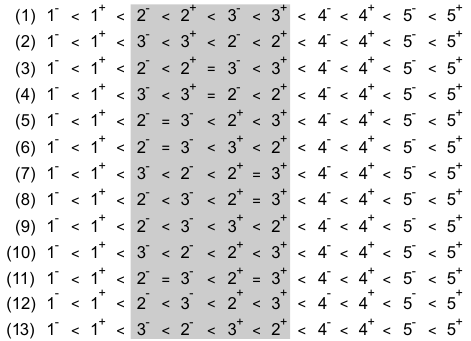}
\caption{All possible solutions for the graph shown in Figure~\ref{fig:old_example-indeterminate-graph}, with indeterminate orderings highlighted.}
\label{fig:old_indeterminant-list}
\end{figure}

As another example, consider the following two events in Example~\ref{ex:running}: John \uline{worked} in the morning and John \uline{ignored} the text in the morning. Both events happened in the morning, however, we do not have enough information to determine these two events' order. It is possible he ignored the text before he started working and it is possible that he ignored it while he was working. Using the DFS algorithm to be developed in Lemma~\ref{lem:indeterminacy-detection}, we can highlight the temporal indeterminacy as shown in Figure~\ref{fig:example-timeline}.

Therefore, in general, we would like to know whether there is any indeterminacy between two time-points in a timeline. Specifically, for any two time-points $u$ and $v$ for which $u<v$ in a given minimum normal form timeline\footnote{We only need to check the case that one time-point is in front of another time-point, since if these two time-points are mapped to the same time $t$ in the minimum normal timeline $L$, then the problem has a trivial solution. To see this, consider the following two possibilities. First, $u$ and $v$ belong to the same ``compound'' time-point. This means that there is an equality constraint between them, therefore these two time-points must always be mapped to the same time in any consistent timeline. Second, $u$ and $v$ do not belong to the same ``compound'' time-point. Since they are mapped to the same time $t$ in the minimum normal timeline, there does not exist any (direct or indirect) constraint between $u$ and $v$. Hence we can construct a timeline in which $u<v$ and another timeline in which $v<u$. } output by Algorithm~\ref{alg:Kahn}, is there another timeline in which $u=v$ or $v<u$? As part of \TLEX, we have an efficient subroutine to check this.

\begin{lemma}
\label{lem:indeterminacy-detection}
Let $L$ be the minimum normal form timeline output by Algorithm~\ref{alg:Kahn} in Step 4 of \TLEX, and let $u$ and $v$ be two time-points in $L$ with $u<v$. Then there is an algorithm with running time $O(m+n)$ which checks whether there is another consistent timeline in which $u=v$ or $v<u$.
\end{lemma}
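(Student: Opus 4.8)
The plan is to reduce the indeterminacy question to a single reachability test in the ``compound'' PA constraint DAG $\tilde{G}^{(i)}_{PA}$ produced by \textsc{Consistency Checking}. Recall that after merging equal time-points this graph is a DAG all of whose edges carry strict \textless\ constraints, so that an edge $(x,y)$ forces $L'(x) < L'(y)$ in \emph{every} consistent timeline $L'$. Since $L(u) < L(v)$, the points $u$ and $v$ cannot lie in the same ``compound'' time-point (that would force equality and hence equal $L$-values), so let $\bar u, \bar v$ denote the distinct compound vertices containing $u$ and $v$. My central claim is that an alternative consistent timeline realizing $u=v$ or $v<u$ exists \emph{if and only if} there is no directed path from $\bar u$ to $\bar v$ in $\tilde{G}^{(i)}_{PA}$.

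First I would dispatch the easy direction. If there is a directed path $\bar u = x_0 \to x_1 \to \cdots \to x_p = \bar v$, then composing the strict constraints of its edges forces $L'(u) < L'(v)$ in every consistent timeline $L'$; hence no timeline can realize $u=v$ or $v<u$, and the order of the pair is determinate. For the converse, suppose no such path exists. I first note that, because $L$ itself is a consistent timeline with $L(u)<L(v)$, there can be no directed path from $\bar v$ to $\bar u$ either, for such a path would force $v<u$ and contradict $L(u)<L(v)$.

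The construction then adds a fresh edge $\bar v \to \bar u$ to $\tilde{G}^{(i)}_{PA}$. Because there is no directed path from $\bar u$ to $\bar v$, this addition closes no directed cycle, so the augmented graph is still a DAG, and applying \textsc{Greedy Kahn's Algorithm} (Algorithm~\ref{alg:Kahn}) to it yields a normal form timeline $L'$ that respects all of the original constraints and in addition satisfies $L'(v) < L'(u)$ — the desired witness of indeterminacy. (If a witness with $u=v$ is wanted instead, one may contract $\bar u$ and $\bar v$ into a single vertex; the absence of a directed path in \emph{both} directions, already established above, keeps the contracted graph acyclic, and Kahn's algorithm then maps $u$ and $v$ to the same value.) This establishes the characterization, and the algorithm is simply one depth-first (or breadth-first) search from $\bar u$ in $\tilde{G}^{(i)}_{PA}$: report ``determinate'' if $\bar v$ is reached and ``indeterminate'' otherwise, at a cost of $O(m+n)$ for a single traversal, matching the claimed bound.

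The main obstacle I expect lies in the backward direction — arguing rigorously that the mere \emph{absence} of a directed $\bar u$-to-$\bar v$ path suffices to build a \emph{globally} consistent reversing timeline, rather than just locally flipping the pair. This is precisely where the DAG structure and the ``insert edge and re-run topological sort'' trick do the work, and where the consistency of the already-computed $L$ must be invoked to rule out the symmetric path $\bar v \to \bar u$. By contrast, the forward implication and the linear-time traversal are routine.
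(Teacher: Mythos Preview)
Your proposal is correct and follows essentially the same approach as the paper: both reduce indeterminacy detection to a single reachability test from $u$ to $v$ in the (compound) PA constraint DAG via one DFS, with the same $O(m+n)$ bound. Your treatment of the backward direction is in fact more careful than the paper's---the paper simply asserts that absence of a $u$-to-$v$ path yields an alternative timeline, whereas you explicitly construct one by adding the edge $\bar v \to \bar u$ and re-running Kahn's algorithm---but this is elaboration within the same argument rather than a different route.
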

\begin{proof}
Observe that there exists a timeline $L'$ in which $v=u$ or $v<u$ if and only if there is no directed path from $u$ to $v$ in the PA constraint graph, as otherwise, $u$ must precede $v$ in any consistent timeline. But such a fact can be checked by running the DFS algorithm depicted in Algorithm~\ref{alg:DFS} starting from vertex $u$. Note that the difference between our current algorithm and Algorithm~\ref{alg:DFS} is that here we only run \textsc{DFS-Visit}$(G, u)$ (instead of exhausting all vertices in the graph) and terminate once the the DFS tree rooted at $u$ is completely generated. If the DFS tree does not contain $v$, then there is no such path and there are other consistent timelines in which $u$ and $v$ mapped to the same time or $v$ precedes $u$. Clearly, such a check takes at most the running time of the DFS algorithm, which is $O(m+n)$. 
\end{proof}

Note that, in the worst case that we perform such indeterminacy checking for all pairs of time-points, the total running time of Step 5 of \TLEX is at most $\frac{n(n-1)}{2}\cdot O(m+n)=O(mn^2+n^3)$.

\subsection{End-to-End Main Theorem}

Informally, we call a timeline the ``main'' timeline if it contains events and times that actually happen in the ``world'' described in the text. Because we do not currently have an automatic way of identifying main timelines, we will rely on information external to \TLEX to identify the main timelines. For this, it is sufficient for a person to identify at least one event or time that occurs on every disjoint main timeline for the text.

Before stating our main theorem, we need several definitions.
\theoremstyle{definition}
\begin{definition}[Trunk-and-Branch Timeline]
\label{def:TNBtimeline}
Let $G_{TM} = (V_{TM}, E_{TM})$ be a consistent TimeML graph with temporally and aspectually connected subgraphs $\{G^{(1)}_{TM}, \ldots, G^{(k)}_{TM}\}$ and corresponding timelines $T_L = \{L_1, \ldots, L_k\}$, and let $T_M \subset T_L$ be the set of main timelines. A \emph{trunk-and-branch timeline} for $G_{TM}$ is a tuple $TBT(G_{TM}) = \langle G_{TBT}, T_M \rangle$, where $G_{TBT}$ is the graph obtained from concatenating\footnote{The \emph{concatenation} of two normal form timelines $L_1$ and $L_2$, of graph $G_1=(V_1, E_1)$ and $G_2=(V_2, E_2)$ respectively, is defined in a natural way as follows. Suppose, without loss of generality, that timeline $L_1$ precedes timeline $L_2$, and the length of $L_1$ is $\ell$. Then the concatenated timeline between $L_1$ and $L_2$, denoted $L=L_1 \circ L_2$ with $L: V_1(G_1) \cup V_2(G_2) \to \N$, is defined to be $L(v)=L_1(v)$ if $v \in V_1(G_1)$ and $L(v)=\ell+L_2(v)$ if $v \in V_2(G_2)$. More generally, if $L_1, L_2, \ldots, L_k$ is a sequence of normal form timelines arranged in temporally-increasing order, then their concatenation is defined to be $L=L_1 \circ L_2 \circ \cdots \circ L_k$. } all timelines in $T_L$ together and combined with the set of subordinating TimeML links between timelines;  namely, $V(G_{TBT})=V_{TM}=\cup_{i=1}^{k}V(G^{(i)}_{TM})$,  $E(G_{TBT})=\left(\bigcup_{i=1}^{k} E(L_{i})\right) \bigcup E_S$, and $E_S$ is the set of subordinating links between timelines.
\end{definition}

\theoremstyle{definition}
\begin{definition}[Minimum Normal Form Trunk-and-Branch Timeline]
\label{def:normalTNBtimeline}
Let $TBT(G_{TM}) = \langle G_{TBT}, T_M \rangle$ be a trunk-and-branch timeline for a consistent TimeML graph $G_{TM}$ with temporally connected subgraphs $\{G^{(1)}_{TM}, \ldots, G^{(k)}_{TM}\}$ and corresponding timelines $T_L = \{L_1, \ldots, L_k\}$. We say $TBT(G_{TM})$ is a \emph{normal form trunk-and-branch timeline} if every timeline in  $T_L$ is a normal form timeline. Furthermore, we say $TBT(G_{TM})$ is the \emph{minimum normal form trunk-and-branch timeline} if every timeline in  $T_L$ is a minimum normal form timeline. 
\end{definition}

By combining the algorithms and running time analysis in Sections~\ref{sec:partitioning}, \ref{sec:transforming}, \ref{sec:correcting}, \ref{sec:solving} and \ref{sec:indeterminacy}, we finally arrive at the following end-to-end main theorem of \TLEX:
\begin{theorem}
\label{thm:main}
On an input of TimeML graph $G_{TM} = (V_{TM}, E_{TM})$, with temporally connected subgraphs $\{G^{(1)}_{TM}, \ldots, G^{(k)}_{TM}\}$. Let $V_M \subset V_{TM}$ be a subset of events or times which are identified as on ``main timelines''. Then \TLEX produces one of the two following outputs: 

\begin{itemize}

\item if all $\{G^{(i)}_{TM}\}_{i \in [k]}$ are consistent, \TLEX outputs 
	\begin{enumerate}
        \item the minimum normal-form trunk-and-branch timeline $TBT(G_{TM}) = \langle G_{TBT}, T_M \rangle$ for $G_{TM}$, and every $i\in[k]$ such that $V(L_i) \cap V_M \neq \emptyset$ is identified as a main timeline; and 
 
	   \item a list of indeterminacy tables $\{ID_1, \ldots, ID_k\}$ such that table $ID_i$ stores whether there is indeterminacy between each pair of time-points in timeline $L_i$. 
	\end{enumerate}
 
\item If some $G^{(i)}_{TM}$ is inconsistent, \TLEX outputs a Maximal List of Inconsistent Cycles (MLIC) for each inconsistent temporally connected subgraph.

\end{itemize}

Moreover, the running time of \TLEX is at most $O(n^{2}m+n^{3})$, where $n$ is the number of events or time expressions and $m$ is the number of TimeML links in the input TimeML graph.
\end{theorem}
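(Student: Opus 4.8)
The plan is to treat Theorem~\ref{thm:main} as a \emph{composition} of the five algorithmic stages analyzed in Sections~\ref{sec:partitioning}--\ref{sec:indeterminacy}, establishing correctness stage-by-stage and then summing the individual running times. The key structural fact that makes such a modular argument valid is that \SLINK relations carry no temporal constraint (as established in \S\ref{sec:schemes}); consequently the temporally and aspectually connected subgraphs $\{G^{(i)}_{TM}\}_{i\in[k]}$ are temporally \emph{independent}, and it is legitimate to solve each subgraph in isolation and then reassemble the results into a single trunk-and-branch structure by concatenation together with the retained subordinating edges $E_S$.

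For correctness I would proceed through the pipeline. \textsc{Partitioning} correctly isolates the temporally and aspectually connected components because it runs DFS on the undirected graph $G'_{TM}$ obtained by discarding \SLINK edges, so two vertices land in the same subgraph exactly when they are joined by a chain of \TLINK/\ALINK relations. \textsc{Transforming} faithfully encodes each subgraph as a PA constraint graph: every \TLINK and \ALINK is replaced by the conjunction of $<$ and $=$ constraints of Table~\ref{tab:constraints}, and since a well-formed TimeML graph is atomic (hence pointisable), this transformation is constraint-preserving. \textsc{Consistency Checking} is correct by the inconsistent-cycle characterization established in \citep{Beek:92}: a PA constraint graph is consistent iff it contains none of the three cycle types of Definition~\ref{def:inconsistentcycle}, and the three parts of the stage detect exactly types (i), (ii), and (iii) respectively (the last via the DFS gray-vertex argument already given), emitting the MLIC whenever any is present. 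When all subgraphs are consistent, the correctness lemma for \textsc{Greedy Kahn's Algorithm} (Algorithm~\ref{alg:Kahn}) guarantees that \textsc{Timeline Generation} returns the unique minimum normal form timeline $L_i$ for each $\tilde{G}^{(i)}_{PA}$; assigning every original time-point the value of its compound time-point preserves minimality, so the concatenation $L_1\circ\cdots\circ L_k$ together with $E_S$ is, by Definitions~\ref{def:TNBtimeline} and~\ref{def:normalTNBtimeline}, precisely the minimum normal form trunk-and-branch timeline. Marking each $L_i$ with $V(L_i)\cap V_M\neq\emptyset$ as a main timeline is immediate from the external labeling $V_M$, and Lemma~\ref{lem:indeterminacy-detection} certifies that each table $ID_i$ records indeterminacy between every time-point pair correctly.

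For the running time I would simply add the five bounds, taking care that the per-subgraph quantities aggregate correctly: since $\sum_i n_i = 2n$ and $\sum_i m_i \le n+2m$, \textsc{Partitioning} and \textsc{Transforming} each cost $O(n+m)$, \textsc{Consistency Checking} costs $O(m+n^2)$, \textsc{Timeline Generation} costs $O(m+n)$, and \textsc{Indeterminacy Detection} costs at most $\frac{n(n-1)}{2}\cdot O(m+n)=O(mn^2+n^3)$ when run over all pairs. The last term dominates, giving the claimed bound $O(n^2 m + n^3)$.

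I expect the main obstacle to be conceptual rather than computational: pinning down precisely why solving each temporally connected subgraph in isolation and then concatenating yields a \emph{globally} correct and \emph{minimum} normal form trunk-and-branch timeline. This requires the independence observation above (no cross-subgraph temporal constraints), care in showing that the compound-merge/unmerge step transports the minimum normal form property from $\tilde{G}^{(i)}_{PA}$ back to the original time-points, and a check that concatenation respects the per-timeline minimality asserted in Definition~\ref{def:normalTNBtimeline}. By contrast, the running-time accounting is routine summation once the aggregation identities $\sum_i n_i=2n$ and $\sum_i m_i\le n+2m$ are in hand.
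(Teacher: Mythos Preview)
Your proposal is correct and follows essentially the same approach as the paper: the paper's proof consists of nothing more than the remark that the theorem follows ``by combining the algorithms and running time analysis in Sections~\ref{sec:partitioning}, \ref{sec:transforming}, \ref{sec:correcting}, \ref{sec:solving} and \ref{sec:indeterminacy},'' and your modular stage-by-stage argument with summed running times is exactly that combination, carried out in greater detail than the paper itself provides.
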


\section{Experimental Evaluation}
\label{sec:evaluation}

We now discuss our experimental evaluations and show that \TLEX performs as expected on real data. We evaluated \TLEX on the four corpora described previously in Table~\ref{tab:corpora}. As described in \S\ref{sec:correcting}, we ran \textsc{Consistency Checking} across all inconsistent texts and manually corrected those TimeML graphs with reference to the original text. To emphasize the importance of \ALINKs, we performed inconsistency detection on TimeML graphs with and without \ALINKs. Table~\ref{tab:inconsistency} shows the number of inconsistent files in corpora. Without \ALINKs, \TLEX determined that 30 texts across the four corpora had inconsistent TimeML graphs. Importantly, these inconsistencies are exactly the same 30 as those found by \cite{Dercynski:2012}, which checks the consistency of the TimeML graphs by transforming them into PA graphs and applying their closure algorithm. But more importantly, the number of inconsistent texts increases significantly when \ALINKs are included. To correct these inconsistencies, we removed self-loops automatically and manually corrected larger cycles. All manual corrections were by reference to the original texts. We produced fully consistent TimeML graphs for all 385 texts. In some cases, we corrected inconsistencies by removing an incorrect link in the original TimeML graph; in other cases by changing an incorrectly labeled link. The details of the corrections to TimeBank are reported elsewhere~\citep{ocal2022comprehensive}.

\begin{table*}[b]
\centering
\small
\begin{tabular}{l|lll}
\toprule
\bf Corpus   & Total Files & Inconsistent Files & Inconsistent Files  \\
             &             & (\TLINKs only)     & (\TLINKs \& \ALINKs)\\
\midrule
Timebank     & 183         & 30                 & 65 \\
N2 Corpus    & 67          & 10                 & 11 \\
ProppLearner & 15          & 9                  & 11 \\
MEANTIME     & 120         & 36                 & 36* \\
\midrule
Total        & 385         & 85                 & 123 \\
\end{tabular}
\caption{Details of the inconsistencies of the four corpora. *The MEANTIME corpus does not have \ALINKs.}
\label{tab:inconsistency}
\end{table*}

After correction, we extracted the minimum normal form timeline for each text using the \TLEX \textsc{Timeline Generation} step and identified their indeterminate sections using \TLEX \textsc{Indeterminacy Detection}. Table~\ref{tab:timelinedesc} shows statistics on the length of the minimum main timelines (i.e., the number of time steps in the minimum solution), the number of subordinated timelines, and the number of indeterminate sections.

Our experiments took less than a minute for the entire 385 TimeML annotated files from four corpora on a current, standard consumer laptop (2.4 GHz 4-core Intel i7 3630QM with 8GB of RAM). 

\begin{table*}[t]
\centering
\small
\begin{tabular}{l|lll|lll|lll} 
 \toprule
& \multicolumn{3}{c|}{\bf Length of}     & \multicolumn{3}{c|}{\bf \#  Subordinated} & \multicolumn{3}{c}{\bf \# of Indeterminate} \\
& \multicolumn{3}{c|}{\bf Main Timeline} & \multicolumn{3}{c|}{\bf Branches / Text}  & \multicolumn{3}{c}{\bf Sections / Text} \\ 							 
 \bf Corpus	  &	Min & Avg   & Max & Min & Avg & Max & Min & Avg & Max \\
 \midrule
 TimeBank  	  & 4   & 8.1   & 18  & 1  & 16.0 & 142 & 0   & 5.3  & 11\\ 
 N2 Corpus     & 6   & 24.7  & 72  & 1  & 19.3 & 106 & 0   & 10.2 & 21\\
 ProppLearner  & 128 & 180.5 & 290 & 33 & 82.7 & 134 & 0   & 28.8 & 63\\
 MEANTIME 	  & 4   & 7.3   & 13  & 0  & 2.0  & 7   & 0   & 3.8  & 7\\  
\midrule
 Total         & 4   & 17.5  & 290 & 0  & 14.8 & 142 & 0   & 6.6  & 63\\
\end{tabular}
\caption{Details of the timelines extracted from the four
corpora.}
\label{tab:timelinedesc}
\end{table*}

\subsection{Sampling Evaluation}
\label{sec:evalsampling}

As we do not have gold-standard annotated timelines against which to compare, we cannot directly compute recall, precision, or $F_1$ for the output of \TLEX. Therefore we performed a sampling evaluation to check the extracted timelines, where we selected aspects of timelines at random to check against the original texts. We used \textit{Simple Random Sampling} (SRS)~\cite[p.~222]{Saunders:2009} wherein every member of a population has an equally likely chance of being selected. In SRS, we check the correctness of a specific feature of a set of $n$ members randomly selected from a population with size $N$ to obtain an estimate of the correctness of that feature over all the data. The formula for calculating
sample size for a finite population is:

\begin{equation*}
n=\frac{n_0}{1+\frac{n_0-1}{N}} \quad\mathrm{where}\quad n_0=\frac{Z^2}{4c^2} 
\end{equation*}

\indent Where $c$ is the confidence interval, $Z$ is the $Z$-score for the confidence level, and $N$ is the population size. For all experiments, we used a $c$ of $0.02 (2\%)$ and a confidence level of $0.95 (95\%)$, which corresponds to a $Z$-score of $1.96$.

Evaluation experiments were performed by the first author and an independent annotator that we hired. Both the annotator and the first author performed SRS separately, and we calculated the inter-annotator agreement (IAA) score based on their results. We evaluated our extracted timelines via simple random sampling, checking five features: (1) the ordering of time-points with respect to each other, (2) the number of main timelines, (3) the placement of events on main versus subordinated timelines, (4) the connecting point of subordinated timelines, and (5) whether indeterminate sections are truly indeterminate with respect to the underlying TimeML graph.

\subsection{Time-point Ordering}

Time-point ordering was the first feature we evaluated. We randomly picked neighboring pairs of time-points from the \TLEX-generated timelines and compared them to the TimeML graph to ensure that the timeline and the graph were consistent. For example, if we examined a pair of neighboring points labeled A$^+$ and B$^-$, we examined the paths in the TimeML graph between A and B to determine if this ordering was correct. Using SRS, we selected 2,264 pairs of time-points out of 39,534 possible neighboring assignment pairs, sampled from each corpus in proportion to the total number of time-points in each. Both the independent annotator and the first author evaluated the correctness of the ordering, and found each pair to be correctly ordered with respect to the original text (IAA kappa score $\kappa=1$). Since the confidence interval is $2\%$, this results in an estimated accuracy of $\numrange{98}{100}\%$ with $95\%$ confidence. Unlike prior approaches, \TLEX achieves a perfect ordering accuracy, modulo sampling error and correctness of the TimeML graph, using all relation types.

\subsection{Number of Main Timelines}
\label{sec:maintimelines}

Identification of the main timelines requires identification of at least one event or time on each main timeline, which is a laborious process. We approximated this identification, therefore, by identifying as main timelines all timelines that did not have incoming subordination links from another timeline in the trunk-and-branch timeline. 

With this identification in hand, we assessed whether multiple main timelines actually corresponded to temporally disjoint and otherwise unrelatable graphs. In texts with multiple main timelines, we define the {\it breaking pair} between two timelines as the pair of events, times, or an event and time, one from each timeline, which are closest in the text measured by number of intervening words. Because the population size (1,241) was not substantially larger than the sample size (818), we manually checked all breaking pairs in our extracted timelines, and whether they actually indicated disjoint timelines. Both the annotator and the first author observed that all of the breaking pairs corresponded with true disjoint breaks between timelines in the texts (IAA kappa score $\kappa=1$).

For some cases, we observed that sometimes a text had multiple timelines which were disjoint, but should have been collected together into a single main timeline because they were temporally relatable. This is because the annotation of the text is missing a relation between one group of events or times and another group of events or times, but in both cases those events occur in the ``real world'' of the text. In other words, the TimeML graph was disconnected in a way that was inconsistent with the actual semantics of the text, meaning the TimeML annotation was incorrect. We observed this situation for 181 texts, which is another dimension along which the TimeML annotations for these corpora might be improved. In our other work, we investigated the disconnectivity in TimeML graphs and showed that these disconnectivities mainly occur by a missing link between two subgraphs due to a manual annotation error~\citep{ocal2022holistic}. We corrected these erroneous disconnections in other work~\citep{ocal2022comprehensive}.

\subsection{Time-point Placement}

We evaluated whether time-points were correctly placed on main versus subordinated timelines, using our approximation for main timelines as indicated previously. The first author and the annotator checked to see if a time-point placed on a subordinate timeline did not occur in the ``real world'' described in the text. 11,474 time-points were on subordinated timelines, and we sampled 1,986 time-points across the corpora, again in proportion to their number in each corpus. Again, all samples were correct (IAA kappa score $\kappa=1$), giving an estimated accuracy of $\numrange{98}{100}\%$ with $95\%$ confidence.

\subsection{Subordinated Connecting Points}

We checked to confirm that subordinated timelines were connected to the correct time-points on the identified main timelines. There are 5,701 subordinated timelines in our data, and every one was connected to a main timeline with at least one subordinating link. The first author and the annotator checked 1,690 connections, observing that every connection was correctly placed (IAA kappa score $\kappa=1$), giving an estimated accuracy of $\numrange{98}{100}\%$ with $95\%$ confidence.

\subsection{Indeterminate Sections}
\label{sec:samplingind}

Finally, the first author and the annotator checked whether indeterminate sections were truly indeterminate with respect to the TimeML graph. There are 2,541 indeterminate sections, comprising 11,688 pairs of time-points. We randomly selected 1,992 time-point pairs from the indeterminate sections. In all cases the pairs were truly indeterminate (IAA kappa score $\kappa=1$), meaning an estimated accuracy of $\numrange{98}{100}\%$ with a $95\%$ confidence.

\section{Future Work}
\label{sec:futurework}

Our work naturally suggests several interesting directions of future research.

Our investigation showed that manually annotated TimeML texts often contain errors that result in temporal inconsistencies; automatic methods for generating TimeML annotations are currently even more noisy and error-prone. Because of this, a logical next step is to develop methods that automatically suggest corrections to the maximal list of inconsistent cycles identified in the \textsc{Consistency Checking} step. Both rule-based and supervised machine learning-based approaches could be developed to address this problem.

Finally, it is interesting to investigate the performance of \TLEX on automatically generated TimeML graphs. Indeed, \TLEX may be applied in this way to measure the quality of the timelines extracted, which provides a good indication of how useful automatic TimeML extraction is in practice.

\section{Contributions}
\label{sec:contributions}

In this work, we developed \TLEX (TimeLine EXtraction), an exact, end-to-end solution which extracts timelines from TimeML annotated texts. The novel features of \TLEX  include identifying specific relations that are involved in an inconsistency (which can then be manually corrected) and determining sections of the timelines that have indeterminate order, which is critical for downstream tasks such as aligning events from different timelines. We provide efficient algorithms which implement the tasks within \TLEX and conduct experimental evaluations by applying \TLEX to 385 TimeML annotated texts from four corpora. We provide a reference implementation of \TLEX, the extracted timelines for all texts, and the manual corrections to the temporal graphs of the inconsistent texts.


\section{Acknowledgments}
We gratefully acknowledge valuable discussions with Jared Hummer, Antonela Radas, Victor Yarlott, Karine Megerdoomian, Akul Singh, and Emmanuel Garcia. Contributions to the implementation of the TLEX algorithm in Java and python were made by FIU KFSCIS Senior Project team members Christopher Eberhard, Stephan Belizaire, Pablo Maldonado, Luis Robaina, Adrian Silva, Victoria Fernandez, Ronald Pena, Ismael Clark, Felipe Arce, Kevin Fontela, Raul Garcia, Leandro Estevez, Carlos Pimentel, Hector Borges, Ivan Parra Sanz, Tony Erazo, Sage Pages, Gerardo Parra, and Franklin Bello Romero. This work was partially funded by research grants TO-134841, TO-135998, and TO-139837 to Dr. Finlayson from MITRE Corporation. Any opinions, findings, and conclusions or recommendations expressed in this material are those of the author(s) and do not necessarily reflect the views of MITRE Corporation. This work was also partially supported by ONR Award N00014-17-1-2983 to Dr. Finlayson.

\bibliographystyle{unsrtnat}
\bibliography{main}  






\end{document}